\newcommand{\ones}[1]{|#1|_1} 
\newcommand{\zeros}[1]{|#1|_0}
\newcommand{\OMM}{\textsc{OMM}\xspace}                             
\newcommand{\mOMM}{\ensuremath{m}\text{-}\textsc{OMM}\xspace}
\newcommand{\expect}[1]{\mathrm{E}\left[#1\right]}        
\newcommand{\E}[1]{\mathrm{E}\left[#1\right]}        
\newcommand{\rp}{\mathrm{rp}}
\newcommand{\refer}{\mathcal{R}_p}
\newcommand{\N}{\ensuremath{\mathbb{N}}} 
\newcommand{\nsga}{NSGA\nobreakdash-II\xspace}
\newcommand{\nsgaIII}{NSGA\nobreakdash-III\xspace}
\newtheorem{theorem}             {Theorem}
\newtheorem{lemma}      [theorem]{Lemma}
\newtheorem{definition} [theorem]{Definition}
\title{Towards a Rigorous Understanding of the Population Dynamics of the NSGA-III: Tight Runtime Bounds}
\author{
    Written by AAAI Press Staff\textsuperscript{\rm 1}\thanks{With help from the AAAI Publications Committee.}\\
    AAAI Style Contributions by Pater Patel Schneider,
    Sunil Issar,\\
    J. Scott Penberthy,
    George Ferguson,
    Hans Guesgen,
    Francisco Cruz\equalcontrib,
    Marc Pujol-Gonzalez\equalcontrib
}
\author{Andre~Opris}
\begin{document}

\maketitle

\begin{abstract}
    Evolutionary algorithms are widely used for solving multi-objective optimization problems. A prominent example is NSGA-III, which is particularly well suited for solving problems involving more than three objectives, distinguishing it from the classical NSGA-II. Despite its empirical success, the theoretical understanding of NSGA III remains very limited, especially with respect to runtime analysis. A central open problem concerns its population dynamics, which involve controlling the maximum number of individuals sharing the same fitness value during the exploration process. In this paper, we make a significant step towards such an understanding by proving tight runtime bounds for NSGA-III on the bi-objective OneMinMax ($2$-OMM) problem. Firstly, we prove that NSGA-III requires $\Omega(n^2 \log(n) / \mu)$ generations in expectation to optimize $2$-OMM assuming the population size $\mu$ satisfies $n+1 \leq \mu =O(\log(n)^c(n+1))$ where $n$ denotes the problem size and $c<1$ is a constant. Apart from~\cite{opris2025multimodal}, this is the first proven lower runtime bound for NSGA-III on a classical benchmark problem. Complementing this, we secondly improve the best known upper bound of NSGA-III on the $m$-objective OneMinMax problem ($m$-OMM) of $O(n \log(n))$ generations by a factor of $\mu /(2n/m + 1)^{m/2}$ for a constant number $m$ of objectives and population size $(2n/m + 1)^{m/2} \leq \mu \in O(\sqrt{\log(n)} (2n/m + 1)^{m/2})$. This yields tight runtime bounds in the case $m = 2$, and the surprising result that NSGA-III beats NSGA-II by a factor of $\mu/n$ in the expected runtime.
\end{abstract}

\section{Introduction}
Decision making is a fundamental aspect of many areas in artificial intelligence, where it is often important to explore trade-offs and compromises between different options before reaching a conclusion~\cite{LUUKKONEN2023102537}. Such situations are often formulated as multi-objective optimization problems, which are typically tackled using evolutionary multi-objective algorithms~\cite{STEWART2021103830}. They apply principles of nature to optimize functions with conflicting objectives, aiming to find a diverse Pareto-optimal set of solutions. This offers decision makers a range of trade-off solutions, enabling them to select the one that best aligns with their preferences~\cite{TAMSSAOUET202287}. It is therefore not surprising that such algorithms have become essential tools, widely applied across various practical domains. These include artificial intelligence~\cite{ArtIntKoziel}, often in combination with bioinformatics~\cite{HANDLBIO,MultiBioBook}, as well as constraint optimization~\cite{GARCIA2021100983}, machine learning~\cite{9515233, EMOAsMachine}, and engineering~\cite{MultiobjEngineerring, EMOAEngineer}. 
In particular, many of such real-world applications involve optimization problems with many objectives. However, a huge challenge is that, as the number of objectives increases, the Pareto front expands exponentially, making the problems increasingly complex. Additionally, identifying dependencies between individual objectives becomes more difficult. There are already differences between two and more objectives. In the case of two objectives, sorting non-dominated individuals according to the first objective naturally leads to a reverse sorting with respect to the second, making the \emph{crowding distance}, which measures the proximity of search points based on their sorting across objectives, a reliable indicator of their relative closeness. However, this relationship breaks down for problems with three or more objectives, as a solution can have a crowding distance of zero even when it is not close to other solutions (see, for example~\cite{Zheng2023Inefficiency}). As a result, \nsga~\cite{Deb2002}, the most cited EMOA ($\sim$ 56000 citations) which uses the crowding distance as a tie-breaker, succeeds in solving bi-objective problems (see for example~\cite{ZhengLuiDoerrAAAI22} for a rigorous analysis or~\cite{Deb2002} for empirical results), but fails in optimizing many problems where the number of objectives is large (compare with~\cite{Zheng2023Inefficiency} for large differences already between two and three objectives or for empirical studies~\cite{NSGAIIINEFF2003}). 
To overcome this problem, \citet{DebJain2014} designed the \nsgaIII algorithm. It uses a set of predefined reference points instead of the crowding distance. A major advantage is that these reference points can be predefined by users based on their specific needs. Hence, this algorithm has a huge practical impact ($\sim$6000 citations) and it is empirically shown that it can efficiently solve problems with at least four objectives~\cite{DebJain2014,NSGAIIIAppl,NSGAIIIAPPLL}. However, theoretical understanding of its success lags far behind its practical impact and the first papers addressing rigorous runtime analyses of this algorithm appeared only recently (see for example~\cite{WiethegerD23,OprisNSGAIII} for breakthroughs). Surprisingly, even in simple settings,~\citet{opris2025multimodal} showed that NSGA-III exhibits population dynamics that differ significantly from those of NSGA-II: NSGA-III successively iterates through all reference points, always choosing a point associated to a reference point with the fewest chosen individuals so far for the next generation, while NSGA-II treats all points with zero crowding distance equally. Hence, NSGA-III tends to spread solutions very evenly across the Pareto front (compare also with~\cite{CHAUDHARI20221509} for empirical results). Indeed, for appropriate population sizes, it was shown in~\cite{opris2025multimodal} that NSGA-III  
outperforms NSGA-II on the pseudo-Boolean bi-objective multimodal function OJZJ for appropriate population sizes. However, they showed how NSGA-III spreads solutions evenly across the Pareto front \emph{after} converging to local optima. How this distribution evolves during exploration, particularly before reaching a local optimum or when no local optima exist at all, as in $m$-OMM, remains unclear. It is still unknown \emph{when} and \emph{why} NSGA-III performs well, or how quickly it spreads solutions across the Pareto front. As a first step toward understanding its limitations on many-objective problems, we focus on the bi-objective case, which already exhibits complex population dynamics.

\textbf{Our contribution:} We significantly increase the understanding of the population dynamics of the NSGA-III on the pseudo-Boolean $2$-OMM by investigating the \emph{maximum cover number} $\beta$, defined as the maximum number of individuals in the population sharing the same fitness vector. \citet{opris2025multimodal} showed that $\beta$ is non-increasing. Our first two results are about the time to firstly cover a subset $\mathcal{A}$ of the Pareto front of a given cardinality $\alpha$ (Lemma~\ref{lem:cover-number-general-1}), and then spread all solutions evenly on that set (Lemma~\ref{lem:cover-number-general-2}). With high probability, this time is $O(\alpha)$. On the other hand, for a given maximum cover number $\beta$, we analyze the population’s exploration towards the extreme point $1^n$. Specifically, for two constants $0 < a < b \leq 3/4$, we provide a lower bound on the time required to reduce the population’s distance to $1^n$ from at least $n^b$ to at most $n^a$. With high probability, this time is $\Omega(n \ln n / \beta)$ (Lemmas~\ref{lem:spread-of-solutions-1} and~\ref{lem:spread-of-solutions-2}). This bound increases asymptotically with $1/\beta$, which is unsurprising since a smaller $\beta$ reduces the probability of selecting individuals already close to $1^n$, and further decreasing their distance through mutation. Then, this bound can be used, in conjunction with Lemmas~\ref{lem:cover-number-general-1} and~\ref{lem:cover-number-general-2}, to further reduce the cover number by covering and afterwards spreading on a set of cardinality $\Omega(n \ln(n)/\beta)$ for carefully chosen population sizes. This again results, by Lemmas~\ref{lem:spread-of-solutions-1} and~\ref{lem:spread-of-solutions-2}, in a larger lower bound of $\Omega(n \ln n / \beta)$. By repeating this process a finite number of times, we finally reduce $\beta$ to a value of $O(\mu/n)$ before creating a search point $x$ with distance at most $n^{1/16}$ to $1^n$. Notably, this is the smallest possible value for $\beta$ up to a multiplicative constant, and leads to a lower bound of $\Omega(n^2 \ln(n)/\mu)$ expected generations for NSGA-III to optimize $2$-OMM (Theorem~\ref{thm:lower-bound}). To the best of our knowledge, this is the first established lower runtime bound of \nsgaIII on a classical benchmark problem without multimodality. Finally, we improve the upper bound from~\cite{OprisNSGAIII} for any constant number $m$ of objectives and $(2n/m + 1)^{m/2} \leq \mu \in O(\sqrt{\ln(n)}(2n/m + 1)^{m/2})$ by a factor of $O((2n/m + 1)^{m/2}/\mu)$, showing that $m$-OMM can be optimized in $O\left(n \ln(n) (2n/m + 1)^{m/2} / \mu\right)$ generations in expectation (Theorem~\ref{thm:Runtime-Analysis-NSGA-III-mOMM}). This aligns with the earlier lower bound for the case $m = 2$ and reveals the somewhat surprising result that NSGA-III outperforms NSGA-II by a factor of $\mu/n$ (see~\cite{DoerrQu2023a} for a lower runtime bound of $\Omega(n \ln(n))$ generations for NSGA-II to optimize $2$-OMM, if $4(n+1) \leq \mu \leq o(n^\nu)(n+1)$ for $\nu<1$). Despite the latter being the state of the art algorithm for two objectives and widely used in practice (with around 60,000 citations).


\textbf{Related work:}
The mathematical runtime analysis of modern practical MOEAs began only recently. \citet{ZhengLuiDoerrAAAI22} conducted the first runtime analysis of NSGA-II on classical benchmark functions which was the starting point for plenty of successive works in similar contexts dealing with optimizing bi-objective functions by NSGA-II~\cite{Qu2022PPSN, DaOp2023, NSgaIIBeat, Dang2024, DoerrQ23b, DoerrApprox, UpBian, 2025Lessons} which has even been extended on combinatorial optimization problems like minimum spanning trees or subset selection~\cite{NSGAIICombIJCAI, MOEASubset}. Very recently, variants of the NSGA-II were proposed to overcome its shortcomings in solving many-objective problems by adding a simple tie-breaking rule~\cite{Krejca2025b} or by using an alternative version of the crowding distance~\cite{ZhengDoerrCrowding}. The most prominent algorithms when dealing with many-objectives are the SPEA2, SMS-EMOA and NSGA-III which have also been analyzed successfully~\cite{WiethegerD23,Zheng_Doerr_2024,OprisNSGAIII,DoerrNearTight,Opris2025,opris2025multimodal}. 
However, up to~\cite{opris2025multimodal,DoerrQu2023a} there were no proven tight runtime bounds on the performance of NSGA-II or NSGA-III on classical benchmark functions, despite investigating limitations of EMOAs and proving lower bounds by analyzing their population dynamics is a highly active area of research. For example,~\citet{Opris2025PAES} abalyzed the PAES-25 evolutionary strategy with one-bit mutation on the $m$-objective \textsc{LeadingOnesTrailingZeros} problem, proving tight runtime bounds of $\Theta(n^3)$ for $m = 2$, $\Theta(n^3 \ln n)$ for $m = 4$, and $\Theta\left(n(2n/m)^{m/2} \ln(n/m)\right)$ for $m > 4$. Additional tight runtime bounds for the GSEMO algorithm on the bi-objective COCZ and OMM benchmarks are provided in~\cite{doerr2025tightruntimeGSEMO}. To the best of our knowledge, there are no existing results on the population dynamics of NSGA-III, apart from the investigations on OJZJ in~\cite{opris2025multimodal}.

\section{Preliminaries}

Given two random variables $X$ and $Y$ on $\mathbb{N}_0$, we say that $Y$ stochastically dominates $X$ if $\Pr(Y \leq c) \leq \Pr(X \leq c)$ for all $c \geq 0$. The number of ones in a bit string $x$ is denoted by $\ones{x}$ and the number of zeros by $\zeros{x}$, respectively. For any finite set $A$, we write $|A|$ to denote its cardinality. For $n \in \mathbb{N}$ let $[n]:=\{1,\ldots , n\}$, denote by $\ln$ the natural logarithm (i.e. to base $e$) and let $\text{poly}(n)$ be a placeholder for some polynomial in $n$. 

This paper is about many-objective optimization, specifically the maximization of a discrete $m$-objective function $f(x) := (f_1(x), \ldots ,f_m(x))$ for $m \in \mathbb{N}$ where each $f_i : \{0,1\}^n \to \mathbb{N}_0$ for $i \in \{1, \ldots , m\}$. When $m=2$, the function is also called \emph{bi-objective}. For a bit string $x$ let $x:=(x^1, \ldots , x^{m/2})$ where all $x^j$ are of equal length $2n/m$. For a subset $N \subseteq \{0,1\}^n$, we define $f(N) := \{f(x) \mid x \in N\}$. Given two search points $x,y \in \{0,1\}^n$, $x$ \emph{weakly dominates} $y$, denoted by $x \succeq y$, if $f_i(x) \geq f_i(y)$ for all $i \in [m]$ and $x$ \emph{(strictly) dominates} $y$, denoted by $x \succ y$, if one inequality is strict; if neither $x \succeq y$ nor $y \succeq x$ then $x$ and $y$ are \emph{incomparable}. A set $S \subseteq \{0,1\}^n$ is a \emph{set of mutually incomparable solutions} with respect to $f$ if all search points in $S$ are incomparable. 
Each solution not dominated by any other in $\{0, 1\}^n$ is called \emph{Pareto-optimal}. A mutually incomparable set of these solutions that covers all possible non-dominated fitness values is called a \emph{Pareto(-optimal) set} of $f$. For a population $P_t \subset \{0,1\}^n$ and $v \in \mathbb{N}_0^m$ denote by $c_t(v):=|\{x \in P_t \mid f(x) = v\}|$ the \emph{cover number} of $v$, i.e. the number of individuals from $P_t$ with fitness vector $v$. We say that $v$ is \emph{covered} if $c_t(v) \geq 1$.

\begin{algorithm}[t]
	Initialize $P_0 \sim \text{Unif}( (\{0,1\}^n)^{\mu})$\\
	\For{$t:= 0$ to $\infty$}{
		Initialize $Q_t:=\emptyset$\\
		\For{$i=1$ to $\mu$}{
                Sample $s$ from $P_t$ uniformly at random\\
                Create $r$ by standard bit mutation on $s$ with mutation probability $1/n$\\
                Update $Q_t:=Q_t \cup \{r\}$\\
                }
            Set $R_t := P_t \cup Q_t$\\
		Partition $R_t$ into layers $F^1_t,F^2_t,\ldots ,F^k_t$ of non-dominated solutions \label{line:non-dominated}\\
            Find $i^* \geq 1$ such that $\sum_{i=1}^{i^*-1} \lvert{F_t^i}\rvert < \mu$ and $\sum_{i=1}^{i^*} \lvert{F_t^i}\rvert \geq \mu$\\
            Compute $Y_t = \bigcup_{i=1}^{i^*-1} F_t^i$\\
            Choose $\tilde{F}_t^{i^*} \subset F_t^{i^*}$ such that $\lvert{Y_t \cup \tilde{F}_t^{i^*}}\rvert = \mu$ with Algorithm~\ref{alg:Survival-Selection}\\
		Create the next population $P_{t+1} := Y_t \cup \tilde{F}^{i^*}_t$ \label{line:survival}\\
	}
	\caption{NSGA-III~(\protect\cite{DebJain2014}) with population size $\mu$ on an $m$-objective function $f$}
	\label{alg:nsga-iii}
\end{algorithm}
\begin{algorithm}[t]
        Compute the normalisation $f^n$ of $f$\\ 
        Associate each $x\in Y_t \cup F_t^{i^*}$ with its reference point $\rp(x)$ such that the distance between $f^n(x)$ and the line through the origin and $\rp(x)$ is minimized\\ 
        For each $r \in \refer$, set $\rho_r:=|\{x\in Y_t \mid \mathrm{rp}(x)=r\}|$\\
        Initialize $\tilde{F}_t^{i^*}=\emptyset$ and $R':=\refer$\\
        \While{true}{
        Determine $r_{\min} \in R'$ such that $\rho_{r_{\min}}$ is minimal (where ties are broken randomly)~\label{line:minimal}\\
        Determine $x_{r_{\min}} \in F_t^{i^*} \setminus \tilde{F}_t^{i^*}$ which is associated with $r_{\min}$ and minimizes the distance between the vectors $f^n(x_{r_{\min}})$ and $r_{\min}$ (where ties are broken randomly)\label{line:association}\\
        \If{$x_{r_{\min}}$ exists}{
        $\tilde{F}_t^{i^*} = \tilde{F}_t^{i^*} \cup \{x_{r_{\min}}\}$\\
        $\rho_{r_{\min}} = \rho_{r_{\min}} + 1$\\
            \If{$\lvert{Y_t}\rvert + \lvert{\tilde{F}_t^{i^*}}\rvert = \mu$}
                {\Return{$\tilde{F}_t^{i^*}$}}
                }
            \lElse{$R'=R' \setminus \{r_{\min}\}$}
        }
	\caption{Selection procedure utilizing a set $\refer$ of reference points to maximize an $m$-objective function $f$}
	\label{alg:Survival-Selection}
\end{algorithm}
The \nsgaIII algorithm, originated in~\cite{DebJain2014} is shown in Algorithm~\ref{alg:nsga-iii}. Initially, a population of size $\mu$ is created by choosing $\mu$ individuals from $\{0,1\}^n$ uniformly at random. Then in each iteration $t$, a multiset $Q_t$ of $\mu$ new offspring is created by $\mu$ times choosing an individual $s \in P_t$ uniformly at random and applying \emph{standard bit mutation} on $s$, i.e. each bit is flipped independently with probability $1/n$. During the survival selection, the parent and offspring populations $P_t$ and $Q_t$ are merged into $R_t$. Then $R_t$ is partitioned into layers $F^1_{t+1},F^2_{t+1},\dots$ using the \emph{non-dominated sorting algorithm}~\cite{Deb2002} where $F^1_{t+1}$ consists of all non-dominated individuals, and $F^i_{t+1}$ for $i>1$ of individuals only dominated by those from $F^1_{t+1},\dots,F^{i-1}_{t+1}$. Then the critical rank $i^*$ with $\sum_{i=1}^{i^*-1} \lvert{F_t^i}\rvert < \mu$ and $\sum_{i=1}^{i^*} \lvert{F_t^i}\rvert \geq \mu$ is determined. 
All individuals with a lower rank than $i^*$ are included in $P_{t+1}$, while the remaining individuals are selected from $F_t^{i^*}$ using Algorithm~\ref{alg:Survival-Selection}. Hereby, a normalized objective function $f^n$ is computed and then each individual with rank at most $i^*$ is associated with reference points. For the first, we use the normalization procedure from~\cite{WiethegerD23} which can be also used for maximization problems as shown in~\cite{OprisNSGAIII}. We omit detailed explanations as they are not needed for our purposes. For an $m$-objective function $f\colon \{0,1\}^n\rightarrow \mathbb{N}_0^m$, the normalized fitness vector $f^{n}(x):=(f_1^n(x),\dots,f_m^n(x))$ of a search point $x$ is computed as
\begin{align*}
f_j^n(x)
    =\frac{f_j(x)-y_j^{\min}}{y_j^{\text{nad}}-y_j^{\min}}
\end{align*}
for each $j \in [m]$ where $y^{\text{nad}}:=(y_1^{\text{nad}}, \ldots, y_m^{\text{nad}})$ and 
$y^{\min}:=(y_1^{\min}, \dots, y_m^{\min})$ from the objective space are called
\emph{nadir} and \emph{ideal} points, respectively. Computing the nadir point is not trivial and we have $y_j^{\text{nad}} \geq \varepsilon_{\text{nad}}$, and $y_j^{\text{min}} \leq y_j^{\text{nad}} \leq y_j^{\text{max}}$ for every $j \in [m]$ where $\varepsilon_{\text{nad}}$ is a positive threshold set by the user (see~\cite{Blank2019} or~\cite{WiethegerD23} for the details). Further, $y_j^{\max}$ and $y_j^{\min}$ are the maximum and minimum value in objective $j$ from all search points seen so far (i.e. from $P_0,Q_0,\ldots , P_t,Q_t$). 
After computing the normalisation, each individual $x$ is associated with the reference point $\text{rp}(x)$ such that the distance between $f^n(x)$ and the line through the origin and $\text{rp}(x)$ is minimal. 
We use the same set of reference points $\refer$ as proposed in~\cite{DebJain2014}, originated in~\cite{Das1998}. The points are defined as
$$\mathcal{R}_p=\left\{\left(\frac{a_1}{p}, \ldots ,\frac{a_m}{p} \right) 
    \text{ } \Big| \text{ }  
    (a_1,\dots,a_m) \in \mathbb{N}_0^m, 
    \sum_{i=1}^m a_i = p
\right\}$$

where $p \in \mathbb{N}$ is a parameter one can choose according 
to the fitness function $f$. These are uniformly distributed on the simplex determined by the unit vectors $(1,0,\dots,0)^{\intercal},(0,1,\dots,0)^{\intercal},\dots,(0,0,\dots,1)^{\intercal}$.

Then, one iterates through all the reference points where the reference point with the fewest associated individuals that are already selected for the next generation $P_{t+1}$ is chosen. A reference point is omitted if it only has associated individuals that are already selected for $P_{t+1}$ and ties are broken uniformly at random. Next, from the individuals associated to that reference point who have not yet been selected, the one closest to the chosen reference point is selected for the next generation, where ties are again broken uniformly at random. Once the required number of individuals is reached (i.e. if $\lvert{Y_t}\rvert+\lvert{\tilde{F}_t^{i^*}}\rvert = \mu$) the selection ends.

For our analysis, we need the following key lemma regarding the cover number of a Pareto-optimal fitness vector. This also includes the fact that NSGA-III protects Pareto-optimal solutions meaning that if the population size $\mu$ is larger than a set of mutually incomparable solutions and a Pareto-optimal fitness vector is covered, then it is covered for all future generations. It is a combination of Lemma~2 from~\cite{OprisNSGAIII} and Lemma~3.4 from~\cite{opris2025multimodal}. 

\begin{lemma}
\label{lem:cover-number-NSGAIII}
Consider NSGA-III on an $m$-objective function $f$ with $\varepsilon_{\text{nad}} \geq f_{\max}$ and a set $\refer$ of reference points for $p \in \mathbb{N}$ with $p \geq 2m^{3/2}f_{\max}$. Denote by $P_t$ the current population and by $V$ the Pareto front of $f$. Let $S$ be a maximum set of mutually incomparable solutions and suppose that $\mu \geq |S|$. Then the following properties hold.
    \begin{enumerate}
        \item[(1)] If $\mu \geq |S|$ then for each $x \in F_t^1$ there is an $y \in P_{t+1}$ weakly dominating $x$. 
        \item[(2)] Let $v \in \mathcal{P}$ and $0 \leq \alpha \leq \mu/|S|$. If $c_t(v) \geq \alpha$ then also $c_{t+1}(v) \geq \alpha$.
        \item[(3)] Let $v \in \mathcal{P}$ and suppose that $c_{t+1}(v)<c_t(v)$. Then $c_{t+1}(w) \leq c_t(v)$ for every $w \in \mathcal{P}$.
        \item[(4)] Suppose that every $x \in P_t$ is Pareto-optimal. Then $d_t:=\max\{c_t(v) \mid v \in \mathcal{P}\}$ does not increase.
    \end{enumerate}
\end{lemma}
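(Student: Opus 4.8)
The plan is to prove all four items together from two ingredients and a short case distinction on the critical rank $i^*$. The first ingredient is geometric: with $\varepsilon_{\text{nad}} \geq f_{\max}$ and $p \geq 2m^{3/2} f_{\max}$ the reference grid $\refer$ is fine enough that, among the individuals associated during one survival-selection step, $x$ and $y$ receive the same reference point if and only if $f(x) = f(y)$; equivalently, each fitness value $v$ occurring in $F_t^1$ owns a reference point $r_v = \rp(v)$ whose associated individuals are exactly the copies of $v$. This is essentially Lemma~2 of~\cite{OprisNSGAIII} combined with Lemma~3.4 of~\cite{opris2025multimodal}, and I would simply cite it; it is also the only place the hypotheses on $p$ and $\varepsilon_{\text{nad}}$ enter. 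Two consequences I will lean on: since $F_t^1$ is an antichain, the number of reference points that ever carry an unselected individual during one step is at most the number of distinct fitness values in $F_t^1$, hence at most $|S| \leq \mu$; and if $v \in \mathcal{P}$, then all $c_t(v)$ copies of $v$ in $P_t$ are non-dominated, lie in $F_t^1$, and are associated with $r_v$, so $r_v$ starts the selection with at least $c_t(v)$ available individuals.

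The second ingredient is a balancing property of Algorithm~\ref{alg:Survival-Selection} in the case $i^* = 1$, where $Y_t = \emptyset$ so every counter $\rho_r$ starts at $0$. The algorithm always increments a counter that is currently minimal over $R'$, and a reference point that never runs out of available individuals never leaves $R'$; hence if such an $r_v$ finishes the step with $\rho_{r_v} = \gamma$, then inspecting the last increment of any other reference point shows that every counter finishes at value at most $\gamma + 1$. Summing over the at most $|S|$ used reference points and using that the step selects exactly $\mu$ individuals when $i^* = 1$ (since then $|F_t^1| \geq \mu$) gives the pigeonhole bound $\mu \leq \gamma + (|S| - 1)(\gamma + 1)$ whenever $r_v$ is not exhausted. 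This inequality drives items (1)--(3).

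Now the items. For (1): if $i^* > 1$ then $F_t^1 \subseteq Y_t \subseteq P_{t+1}$ and $x$ weakly dominates itself; if $i^* = 1$ and some $v \in f(F_t^1)$ stayed uncovered, then $r_v$ would remain at $\rho_{r_v} = 0$, so no other counter ever exceeds $1$, giving at most $|S| - 1 < \mu$ selected individuals, a contradiction. For (2): if $i^* > 1$ then $c_{t+1}(v) \geq c_t(v) \geq \alpha$; if $i^* = 1$ then either $r_v$ is exhausted and all $\geq c_t(v) \geq \alpha$ copies of $v$ survive, or the pigeonhole bound with $\gamma := \rho_{r_v}$ (final value) forces $\gamma \geq \alpha$ (this is exactly where $\alpha \leq \mu/|S|$ is used, including the boundary case), so again $c_{t+1}(v) \geq \alpha$. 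For (3): a strict drop $c_{t+1}(v) < c_t(v)$ rules out $i^* > 1$ and also rules out $r_v$ being exhausted, so every counter finishes $\leq c_{t+1}(v) + 1 \leq c_t(v)$; since $P_{t+1} \subseteq F_t^1$, for every $w \in \mathcal{P}$ we get $c_{t+1}(w) \leq \rho_{r_w} \leq c_t(v)$. For (4): if some cover number strictly drops, (3) bounds all cover numbers in $P_{t+1}$ by $d_t$; otherwise every cover number is non-decreasing, but $\sum_{v \in \mathcal{P}} c_t(v) = \mu \geq \sum_{v \in \mathcal{P}} c_{t+1}(v)$ because every individual of $P_t$ is Pareto-optimal, so all cover numbers are in fact unchanged and $d_{t+1} = d_t$.

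The main obstacle is the geometric faithfulness statement, which is the real content hidden behind the hypotheses on $p$ and $\varepsilon_{\text{nad}}$; I would import it verbatim from the cited lemmas rather than reprove it. After that the work is just the selection-balance bookkeeping above, the only delicate points being the boundary case $\alpha = \mu/|S|$ in (2) and the restriction to Pareto-optimal individuals in the counting for (4).
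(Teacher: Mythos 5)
Your proposal is correct in substance, but it is worth noting that the paper does not actually prove this lemma at all: it imports the statement wholesale, remarking only that it "is a combination of Lemma~2 from \cite{OprisNSGAIII} and Lemma~3.4 from \cite{opris2025multimodal}." So your reconstruction is a genuinely different (and more informative) route: you isolate the one geometric fact that the hypotheses on $p$ and $\varepsilon_{\text{nad}}$ buy -- that during survival selection each fitness value occurring in $F_t^1$ corresponds to its own reference point, with no sharing -- and then re-derive all four items by elementary bookkeeping on the counters $\rho_r$ of Algorithm~\ref{alg:Survival-Selection}. Your two combinatorial engines are sound: the "last increment" observation that a never-exhausted reference point finishing at $\gamma$ forces every other counter to finish at most $\gamma+1$, and the pigeonhole bound $\mu \leq \gamma + (|S|-1)(\gamma+1)$ when $i^*=1$; from these, items (1)--(3) follow as you describe, and your counting argument for (4) (non-decreasing cover numbers summing to $\mu$ on both sides forces equality) is clean. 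What your approach buys is transparency about where each hypothesis enters ($p$ and $\varepsilon_{\text{nad}}$ only in the geometric faithfulness, $\mu \geq |S|$ only in the pigeonhole); what the paper's approach buys is brevity and the guarantee that the geometric step is actually established somewhere. Two small points deserve care if you write this out: first, your argument needs the correspondence between fitness values and reference points in \emph{both} directions (all copies of $v$ land on the same $r_v$, not merely that each reference point serves one fitness value) -- this holds because the association in line~2 of Algorithm~\ref{alg:Survival-Selection} depends only on $f^n(x)$, but you should check that the cited lemmas rule out ties in that association, since otherwise copies of $v$ could be split across reference points and your bound $c_{t+1}(w) \leq \rho_{r_w}$ in item (3) would degrade; second, in item (2) your inequality $\gamma + 1 > \mu/|S| \geq \alpha$ yields $\gamma \geq \alpha$ only for integer $\alpha$, which suffices for every use of the lemma in this paper but should be stated.
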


The $\mOMM$ benchmark, originated in \cite{Zheng2023Inefficiency}, is defined as follows. 

\begin{definition}
Let $m$ be divisible by $2$ and let the problem size be a multiple of $m/2$. Then the $m$-objective function \mOMM is defined by
$\mOMM: \{0,1\}^n \to \mathbb{N}_0^m$ as 
\[
\mOMM(x) = (f_1(x), \ldots ,f_m(x))
\]
with 
\[
f_\ell(x)=
\begin{cases}
    \sum_{i=1}^{2n/m} x_{i+n(\ell-1)/m}, & \text{ if $\ell$ is odd,} \\
    \sum_{i=1}^{2n/m} (1-x_{i+n(\ell-2)/m}), & \text{ else,}
\end{cases}
\]
for all $x=(x_1, \ldots ,x_n) \in \{0,1\}^n$.
\end{definition}

In $\mOMM$, the bit string is divided into $m/2$ blocks, where $f_{2j-1}$ and $f_{2j}$ correspond to block $j$. Specifically, $f_{2j-1}$ counts the number of ones, and $f_{2j}$ counts the number of zeros in block $j$. Every search point is Pareto-optimal, as the total sum of objectives of any bit string is $n$. A Pareto-optimal set thus, which is also a maximum set of mutually incomparable solutions, has cardinality $(2n/m + 1)^{m/2}$, since for each block $j \in [m/2]$ there are at most $2n/m+1$ many fitness values $(f_{2j-1},f_{2j})$.

\section{Population Dynamics of NSGA-III on $2$-OMM}

\textbf{Bounding the Maximum Cover Number:} First, we establish a general upper bound on the time required to cover a subset of the Pareto front of a given cardinality with high probability. Then, we provide an additional bound on the time needed to evenly distribute solutions across that subset. 

\begin{lemma}
\label{lem:cover-number-general-1}
    Consider \nsgaIII on $f \coloneqq 2$-\OMM under the same conditions as in Lemma~\ref{lem:cover-number-NSGAIII}. (i.e. $p \geq 4\sqrt{2}n$ since $f_{\max} = n$ and $\mu \geq n+1$). Then for a given natural $\alpha \leq 3n/8$ there is $\mathcal{A} \subset P$ with cardinality $\alpha$ which is covered in $64\alpha$ generations with probability at least $1-e^{-\Omega(\alpha)}$.
\end{lemma}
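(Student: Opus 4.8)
We identify the Pareto front of $2$-\OMM with $\{0,1,\dots,n\}$, where a value $k$ represents the fitness vector $(k,n-k)$ and is \emph{covered} by $P_t$ iff some individual of $P_t$ has exactly $k$ ones. The plan is to exhibit a monotonically growing interval of covered values and to bound, via a drift-and-concentration argument, the number of generations until it reaches size $\alpha$. First a monotonicity fact: for $2$-\OMM no search point strictly dominates another, so the first non-dominated layer $F_t^1$ always equals $R_t$, and Lemma~\ref{lem:cover-number-NSGAIII}(1), whose hypotheses hold here (in particular $\mu\ge n+1=|S|$), shows that every fitness value occurring in $R_t=P_t\cup Q_t$ also occurs in $P_{t+1}$. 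Hence once a value is covered it stays covered forever. Now fix any value $k_0$ covered by $P_0$ (e.g.\ the number of ones of the first individual), and let $I_t=[\ell_t,r_t]$ be the longest run of consecutive values all covered by $P_t$ with $k_0\in I_t$; by the previous fact $\ell_t$ is non-increasing and $r_t$ non-decreasing, so $|I_t|$ is non-decreasing and $|I_0|\ge 1$.

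The heart of the proof is the following drift claim: whenever $|I_t|<\alpha$, then conditionally on everything through generation $t$ we have $|I_{t+1}|\ge|I_t|+1$ with probability at least $1/10$. To see this, note $(n-r_t)+\ell_t=n+1-|I_t|>n-\alpha\ge 5n/8$ since $\alpha\le 3n/8$, so $d_t:=\max\{n-r_t,\ell_t\}>5n/16$ (being at least half their sum). Assume $d_t=n-r_t$ (the case $d_t=\ell_t$ is symmetric, flipping a one-bit rather than a zero-bit); then $r_t<n$. Since $r_t$ is covered, some parent has exactly $r_t$ ones, and a fixed offspring is obtained by selecting such a parent and flipping exactly one of its $n-r_t$ zero-bits with probability at least $\tfrac1\mu\cdot\tfrac{n-r_t}{n}\bigl(1-\tfrac1n\bigr)^{n-1}\ge\tfrac{n-r_t}{en\mu}$. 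These $\mu$ events are independent, so with probability at least $1-\bigl(1-\tfrac{n-r_t}{en\mu}\bigr)^{\mu}\ge 1-e^{-d_t/(en)}>1-e^{-5/(16e)}>\tfrac1{10}$ some offspring has $r_t+1$ ones. By the monotonicity fact that value then occurs in $P_{t+1}$, so $r_{t+1}\ge r_t+1$ and (with $\ell_{t+1}\le\ell_t$) $|I_{t+1}|\ge|I_t|+1$, as claimed.

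Given the drift claim, the tail bound is routine. A standard coupling of the increment events with independent $\mathrm{Bernoulli}(1/10)$ trials shows that the number of generations before time $64\alpha$ in which $|I_t|$ increases stochastically dominates $\mathrm{Bin}(64\alpha,1/10)$, up to the first time $|I_t|$ reaches $\alpha$; since $|I_0|\ge1$, this gives $\Pr(|I_{64\alpha}|<\alpha)\le\Pr(\mathrm{Bin}(64\alpha,1/10)<\alpha)\le e^{-\Omega(\alpha)}$ by a multiplicative Chernoff bound, the mean $6.4\alpha$ being far above $\alpha$. On the complementary event, $|I_{64\alpha}|\ge\alpha$, and any $\alpha$ values of the covered interval $I_{64\alpha}$ form a valid $\mathcal A$.

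The step I expect to be the main obstacle is securing a per-generation improvement probability bounded below \emph{uniformly over all reachable populations}: this is exactly where the hypothesis $\alpha\le 3n/8$ enters, guaranteeing that at least one endpoint of $I_t$ still has $\Omega(n)$ room to grow, whereas otherwise both endpoints could drift toward $0$ and $n$ and push the extension probability down to $o(1)$. A secondary point requiring care is that a newly created fitness value is actually retained by NSGA-III's reference-point selection, which is precisely the content of Lemma~\ref{lem:cover-number-NSGAIII}(1) and relies on $\mu\ge n+1$.
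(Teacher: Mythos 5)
Your proof is correct, but it takes a genuinely different route from the paper's. The paper fixes the target set $\mathcal{A}$ \emph{in advance}: it first argues via a Chernoff bound that initialization produces an individual $x_0$ with $f_j(x_0)\in[3n/8,5n/8]$, takes $\mathcal{A}$ to be the window of radius $\alpha/2$ around $f(x_0)$ (so that every relevant mutation always has $\Omega(n)$ bits of the right type to flip), and then, for each individual target $w\in\mathcal{A}$, tracks the distance $d_t=\min_{x\in B_t}|f_1(x)-w_1|$, bounds the hitting time of $0$ by a sum of geometric random variables via Theorem~1 of Witt, and finishes with a union bound over the $\alpha$ targets. You instead track a single potential --- the length of the maximal covered interval $I_t$ around an arbitrary initial value --- and choose $\mathcal{A}$ \emph{adaptively} at the end as any $\alpha$ values of $I_{64\alpha}$; since the lemma is existential in $\mathcal{A}$, this is legitimate and is also compatible with how Lemma~\ref{lem:cover-number-general-2} consumes the result. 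Your key observation that $(n-r_t)+\ell_t=n+1-|I_t|>5n/8$ whenever $|I_t|<\alpha\le 3n/8$, so that at least one endpoint always has $\Omega(n)$ flippable bits, replaces the paper's initialization argument and uses the hypothesis $\alpha\le 3n/8$ for the same ultimate purpose (a per-generation success probability of $\Omega(1)$), but without conditioning on a lucky start. What your approach buys is a more elementary tail bound (a single multiplicative Chernoff bound for a binomial with mean $6.4\alpha$ against the threshold $\alpha$, rather than concentration for sums of geometrics plus a union bound over $\alpha$ targets) and one fewer failure event; both proofs rely identically on Lemma~\ref{lem:cover-number-NSGAIII}(1) and $\mu\ge n+1$ to guarantee that covered fitness values are never lost, and your numerical check $1-e^{-5/(16e)}>1/10$ is valid.
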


\begin{proof}
    Denote by $F$ the Pareto front of $2$-OMM. By a classical Chernoff bound the probability is at least $1-e^{-\Omega(n)}$ that there is an individual $x$ initialized with $f_j(x) \in [3n/8,5n/8]$. Suppose that this happens and fix such an individual $x_0$. Let $\mathcal{A} \coloneqq \{v \in F \mid v_i \in [f_i(x_0)-\alpha/2, f_i(x_0) + \alpha/2] \text{ for all }i \in \{1,2\}\} \subset [3n/8- \alpha/2,5n/8+\alpha/2]$. 
    Then we see that $|\mathcal{A}| \geq \alpha$. Fix a covered $v \in \mathcal{A}$ and another uncovered $w \in \mathcal{A}$. We show with probability at least $1-e^{-\Omega(\alpha)}$ the vector $w$ is covered after $64\alpha$ generations. Let $B_t \coloneqq \{x\in P_t \mid f(x) \in \mathcal{A}\}$ and $d_t \coloneqq \min_{x \in B_t} |f_1(x)-w_1|$. Note that $0 \leq d_t \leq \alpha$. Further, we see that $w$ is covered if $d_t=0$ since if $f_1(x)=w_1$ then also $f_2(x)=w_2$ due to $f_1(x)+f_2(x) = w_1+w_2 = n$. By Lemma~\ref{lem:cover-number-NSGAIII}(1), $d_t$ cannot increase, but it can be decreased in one single trial by choosing $x \in P_t$ with $|f_1(x)-w_1| = d_t$ as parent (probability at least $1/\mu$) and then flipping a one bit to zero and not changing any other bit if $f_1(x)-w_1>0$. On the other hand, if $f_1(x)-w_1<0$, flip a zero bit to one. Both happen with probability at least $(3n/8- \alpha/2)/n \cdot (1-1/n)^{n-1} \geq (3n/8 - 3n/16)/(en) \geq 3/(16e)$. Then in one generation, $d_t$ decreases with probability at least
    $1-(1-\frac{3}{16e \mu})^\mu \geq \frac{3/(16e)}{1+3/(16e)} \geq \frac{3}{32e} \eqqcolon p$ where the first inequality is due to Lemma~10 in~\cite{Badkobeh2015}. Note that $p = \Omega(1)$ and for each $i \in [\alpha]$, define the random variable $X_i$ as the number of generations such that $d_t = i$. Then the time until $d_t=0$ is $X \coloneqq \sum_{i=1}^\alpha X_i$, the latter stochastically dominated by the independent sum $Y \coloneqq \sum_{i=1}^\alpha Y_i$ of geometrically distributed random variables with success probability $p_i=p = \Omega(1)$. Note that $\E{Y} = \alpha/p$ and we obtain by Theorem~1 in~\cite{Witt14} for $s \coloneqq \sum_{i=1}^{\alpha} 1/p_i^2 = 1024 \alpha e^2/9$ and $\lambda \geq 0$ the inequality $
    \Pr(Y \geq \E{Y} + \lambda) \leq \exp(-\frac{1}{4} \min\{\frac{\lambda^2}{s}, \lambda p \})$.
    For $\lambda = \alpha/p$ we obtain $\Pr(X \geq 64\alpha) \leq \Pr(X \geq 64e\alpha/3) = \Pr(X \geq 2\alpha/p) \leq \Pr(Y \geq 2\alpha/p) = \Pr(Y \geq \expect{Y} + \alpha/p) \leq e^{-\Omega(\alpha)}$. By a union bound on all $w \in \mathcal{A}$ we see that $\mathcal{A}$ is covered in $64\alpha$ generations with probability $1-e^{-\Omega(\alpha)}$.
\end{proof}

Now we give the following upper bound on the time such that, with high probability, the solutions are evenly spread on a set $\mathcal{A}$ with cardinality $\alpha$ or, in other words, the cover number of each $v \in \mathcal{A}$ is bounded by $\lceil{\mu/\alpha}\rceil$ from above.

\begin{lemma}
\label{lem:cover-number-general-2}
    Consider \nsgaIII on $f \coloneqq 2$-\OMM and suppose that all conditions of Lemma~\ref{lem:cover-number-NSGAIII} are satisfied. Suppose that $\mu = \text{poly}(n)$. Let $\alpha \leq 3n/8$ be a natural number and let $\gamma:=\min\{\lceil{n/\ln(n)}\rceil,\lceil{\mu/\alpha}\rceil\}$. Then after $84 \alpha + 46\gamma$ generations, each $v \in \N_0^m$ has cover number at most $\lceil{\mu/\alpha}\rceil$ with probability $1-o(1)$. Hence, if $\alpha \geq n/\ln(n)$ then $\gamma \leq \alpha$ and therefore, $130\alpha$ generations suffice. The expected number of generations is $O(\alpha + \gamma)$.  
\end{lemma}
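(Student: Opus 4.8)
The plan is to recast \nsgaIII's survival selection on $2$-\OMM as a combinatorial ``water-filling'' rule, to deduce that the event ``$\max_v c_t(v)\le q^*$'' (with $q^*:=\lceil\mu/\alpha\rceil$) is self-sustaining, and then to reach this event inside the budget of $84\alpha+46\gamma$ generations by feeding Lemma~\ref{lem:cover-number-general-1} into a ``wave'' argument describing how populous Pareto points fill their neighbours.

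First I would establish the water-filling picture. On $2$-\OMM every point is Pareto-optimal, so $R_t=P_t\cup Q_t$ is a single front, $Y_t=\emptyset$, and every reference point enters Algorithm~\ref{alg:Survival-Selection} with $\rho_r=0$. Within each generation $f^n(x)$ is a function of $f(x)$ alone, so all individuals of one fitness value share a reference point, and since $p\ge 4\sqrt2\,n$ distinct Pareto values get distinct reference points (the resolution property behind Lemma~\ref{lem:cover-number-NSGAIII}). Hence the while-loop is a balanced round-robin over the covered values $v_1,\dots,v_N$ of $R_t$, with multiplicities $m_i$ (the number of individuals of fitness $v_i$ in $R_t$, written $c^{R_t}(v_i)$; $\sum_i m_i=2\mu$): the next population has $\max_v c_{t+1}(v)=q$ where $q$ is the least integer with $\sum_i\min(m_i,q)\ge\mu$. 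Thus $\max_v c_{t+1}(v)\le q^*$ whenever $\sum_i\min(m_i,q^*)\ge\mu$, in particular whenever at least $\lceil\mu/\gamma\rceil$ of the $v_i$ carry $\ge\gamma$ copies (recall $\gamma\le q^*$). Self-maintenance is then immediate: if $c_t(v)\le q^*$ for all $v$, each value's excess in $R_t$ is $(c_t(v)+o_v-q^*)^+\le o_v$, where $o_v$ is the number of offspring of fitness $v$ and $\sum_v o_v=\mu$, hence $\sum_v\min(c^{R_t}(v),q^*)\ge 2\mu-\mu=\mu$ and $\max_v c_{t+1}(v)\le q^*$ again. So it suffices to hit the target event once inside the budget.

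Next I would cover and spread. By Lemma~\ref{lem:cover-number-general-1}, after $64\alpha$ generations a contiguous block $\mathcal A$ of $\alpha$ central Pareto points is fully covered with probability $1-e^{-\Omega(\alpha)}$. On such points a parent reproduces its fitness with probability $(1-1/n)^n=\Omega(1)$ and jumps to either Pareto-neighbour via one suitable bit flip with probability $\Omega(1)$ (there are $\Theta(n)$ bits of each value), so summing over the $\mu$ offspring the expected number of offspring of fitness $v$ is $\Omega(c_t(v)+c_t(v^-)+c_t(v^+))$ with $v^\pm$ the neighbours of $v$ in $\mathcal A$; and while the population stays spread over only $O(\alpha)$ values the water level of the first part stays $\Omega(q^*)$, so a value with $O(\gamma)$ copies keeps all its offspring. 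Plugging these two facts into a drift/Chernoff argument --- concentration being available only once a value holds $\Omega(\log n)$ copies --- one gets that a front of values carrying $\Omega(\gamma)$ copies advances across $\mathcal A$ at constant speed and builds amplitude $\Theta(\gamma)$, so that after a further $O(\alpha+\gamma)$ generations at least $\lceil\mu/\gamma\rceil$ of the covered values carry $\ge\gamma$ copies, with probability $1-o(1)$. The extreme regimes need separate bookkeeping: if $\alpha=O(\sqrt n)$ then $\max_v c_0(v)=O(\mu/\sqrt n)\le q^*$ already holds after initialization (Chernoff bound on the most populous fitness value), so only self-maintenance is needed; and if $\mu$ is so large that $\lceil\mu/\gamma\rceil$ would exceed $n+1$, then $\alpha=\Omega(\sqrt n)$, the budget $84\alpha+46\gamma=\Omega(\alpha)$ still allows the wave to spread over $\Theta(\alpha)$ values, and the water-filling rule then forces $\max_v c_{t+1}(v)\le q^*$ directly. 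The step I expect to be the main obstacle is exactly this wave/drift analysis: controlling the non-concentrated regime where a value still holds $O(\log n)$ copies, tracking the interplay between offspring production and the selection water level, and squeezing the constants for this phase down to $20\alpha+46\gamma$.

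Finally, for the expected-runtime claim, $\max_v c_t(v)$ is non-increasing on $2$-\OMM by Lemma~\ref{lem:cover-number-NSGAIII}(4) (every individual is Pareto-optimal), so the whole analysis restarts from any generation; the number of length-$(84\alpha+46\gamma)$ phases until success is stochastically dominated by a geometric variable with $\Omega(1)$ success probability, giving $O(\alpha+\gamma)$ expected generations.
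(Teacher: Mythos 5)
Your structural picture of the survival selection as a round\nobreakdash-robin/water\nobreakdash-filling over covered fitness values is sound, and your self-maintenance computation correctly re-derives Lemma~\ref{lem:cover-number-NSGAIII}(4). However, the proposal has a genuine gap at its quantitative core, and you flag it yourself: the ``wave'' argument by which values carrying $\Omega(\gamma)$ copies are supposed to advance across $\mathcal{A}$ at constant speed is never carried out, and it is precisely the non-concentrated regime (values holding $O(\log n)$ copies) that you leave open. The paper avoids this difficulty entirely with two much simpler devices. First, a ``success'' dichotomy via Lemma~\ref{lem:cover-number-NSGAIII}(3): if the cover number of \emph{any} value ever decreases before reaching $\lceil\mu/\alpha\rceil$, then at that very moment every value has cover number at most $\lceil\mu/\alpha\rceil$, and by part~(4) this persists forever -- so one may assume each value's count is monotonically non-decreasing until the target. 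Second, counts grow by pure self-replication (pick a parent of the same fitness, probability $\ge 1/\mu$, flip no bits, probability $\ge 1/4$), giving a $+1$ increase per generation with probability at least $1/5$; no neighbour-to-neighbour propagation is needed, and the whole analysis reduces to concentration of sums of geometric variables (plus a Chernoff-based multiplicative phase once a count exceeds $n/\ln(n)$). Your gesture that ``the water level stays $\Omega(q^*)$'' is essentially the same dichotomy in disguise (if the water level ever drops to $q^*$ you are done), but you use it as an assumption rather than closing the case split.

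There is also a concrete flaw in your sufficient condition for the regime $\lceil\mu/\alpha\rceil>\lceil n/\ln(n)\rceil$ (the paper's Case~2, where $\gamma=\lceil n/\ln(n)\rceil$). You aim to exhibit $\lceil\mu/\gamma\rceil$ values each carrying at least $\gamma$ copies, but in this regime $\lceil\mu/\gamma\rceil\gtrsim\mu\ln(n)/n>\alpha=|\mathcal{A}|$, so the covered set $\mathcal{A}$ produced by Lemma~\ref{lem:cover-number-general-1} simply does not contain enough values to meet your target, and you have no control over values outside $\mathcal{A}$. The paper instead grows each of the $\alpha$ counts all the way to $\lceil\mu/\alpha\rceil$ (not merely to $\gamma$), using the multiplicative growth $Y_{t+1}\ge 23Y_t/20$ once $Y_t\ge n/\ln(n)$, which costs only a further $\gamma$ generations. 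Your initialization shortcut for small $\alpha$ is also not safe as stated: for $\alpha=C\sqrt{n}$ with $C$ large, the most populous initial value has $\approx\mu\sqrt{2/(\pi n)}$ copies, which exceeds $\lceil\mu/\alpha\rceil$. The expected-time argument at the end (geometric number of phases) is fine and matches the paper.
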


\begin{proof}
    Denote by $F$ the Pareto front of $2$-OMM. By Lemma~\ref{lem:cover-number-general-1} there is a set $\mathcal{A} \subset P$ with cardinality $\alpha$ which is covered after $64\alpha$ generations with probability at least $1-e^{-\Omega(\alpha)}$. Suppose that this happens. Denote the decrease of the cover number of a vector $v \in F$ before reaching $\lceil{\mu/\alpha}\rceil$ as a \emph{success}. If a success occurs, we see that the cover number of all other Pareto-optimal vectors is at most $\lceil{\mu/\alpha}\rceil$ by Lemma~\ref{lem:cover-number-NSGAIII}(3) and it cannot increase by Lemma~\ref{lem:cover-number-NSGAIII}(4) (since every solution is Pareto-optimal) and hence, the lemma holds. We show with probability $1-o(1)$ that all $v \in \mathcal{A}$ have a cover number of at least $\lceil{\mu/\alpha}\rceil$ or a success occurred after further $46\gamma + 20\alpha$ generations. In the former case we have that $\lceil{\mu/\alpha}\rceil = \mu/\alpha$ and all other $v \in F \setminus \mathcal{A}$ have cover number $0$ and hence, the cover number of all $v \in F$ is also bounded by $\gamma$. Depending on the value of $\gamma$, we consider two cases where $F$ denotes the Pareto front of $2$-OMM.
    
    \textbf{Case 1:} Let $\lceil{\mu/\alpha}\rceil \leq \lceil{n/\ln(n)}\rceil$ (i.e. $\gamma = \lceil{\mu/\alpha}\rceil$). Fix $v \in F$, denote by $c_t$ its cover number and for $j \in [\gamma-1]$ let $X_j$ be a random variable that counts the number of generations with $c_t=j$. Then the number of generations until a success occurs or the cover number of $v$ is at least $\gamma$ is at most $X \coloneqq \sum_{j=1}^{\gamma-1} X_j$. Note that $c_t$ can be increased by choosing an individual $y$ with $f(x)=f(y)$ as parent and flipping no bits (prob. $1/\mu \cdot (1-1/n)^n \geq 1/(4 \mu)\eqqcolon\sigma_t$). Hence, the probability of increasing $c_t$ in one generation is at least 
    $1-(1-\sigma_t)^{\mu} \geq \frac{\sigma_t \mu}{1+\sigma_t \mu} = \frac{1/4}{1+1/4} = \frac{1}{5}$.
    Hence, $X$ is stochastically dominated by an independent sum $Z \coloneqq \sum_{j=1}^{\gamma-1}Z_j$ of geometrically distributed random variables $Z_j$ with parameter $p=1/5$. 
    Then $\E{X} \leq \E{Z} \leq 5 \gamma$ and hence, by Theorem~1 in~\cite{Witt14}, we obtain for $s \coloneqq \sum_{i=1}^{\gamma-1} 1/p_i^2 \leq  25\gamma$, and $\lambda \geq 0$ the inequality $\Pr(Z \geq \E{Z} + \lambda) \leq \exp(-\frac{1}{4} \min\{\frac{\lambda^2}{s}, \lambda p \})$
    and for $\lambda=40 \gamma + 20\alpha$ we obtain $\Pr(X \geq (5 + 40) \gamma + 20\alpha) = \Pr(X \geq 5 \gamma + (40 \gamma + 20\alpha)) \leq  \Pr(Z \geq \E{Z} + 40 \gamma + 20\alpha) \leq e^{-2\gamma - \alpha}$. By a union bound on $|\mathcal{A}| = \alpha$ different Pareto-optimal vectors, we see that with probability at most $\alpha \cdot e^{-2 \gamma - \alpha} = o(1)$ a success occurred or the cover number of all $v \in \mathcal{A}$ is at least $\lceil{\mu/\alpha}\rceil$ after further $45 \gamma + 20\alpha$ generations.

    \textbf{Case 2:} Suppose that $\lceil{\mu/\alpha}\rceil > \lceil{n/\ln(n)}\rceil$ (i.e. $\gamma=\lceil{n/\ln(n)}\rceil$). 
    Fix $v \in F$, and let $Y_t$ denote the number of individuals $x$ such that $f(x) = v$ at generation $t$. Then with probability $1-e^{-2\gamma - \alpha}$ after further $45\gamma + 20\alpha$ generations by Case~1, a success occurred or the cover number of $v$ is at least $n\ln(n)$. Suppose the latter (otherwise the statement of the lemma holds) and let $Z_t$ be the number of newly created individuals with fitness vector $v$ in generation $t$. We have $\expect{Z_t} \geq Y_t/4 \geq n/(4 \ln(n))$: A generation consists of $\mu$ independent trials and in each trial, with probability at least $n/(\ln(n)\mu)$, an individual $x$ with $f(x)=v$ is selected as the parent, and during mutation, no bit is flipped with probability at least $(1 - 1/n)^n \geq 1/4$. Hence, by a classical Chernoff bound, $\Pr(Z_t \leq 3/5 \cdot \expect{Z_t}) = \Pr(Z_t \leq (1-2/5) \cdot \expect{Z_t}) \leq e^{-\Omega(\expect{Z_t})} = e^{-\Omega(n/\ln(n))}$. Hence, with probability at least $1 - e^{-\Omega(n/\ln(n))}$, we have $Y_{t+1} \geq \min\left\{Y_t + 3/5 \cdot \expect{Z_t}, \lceil{\mu/\alpha}\rceil\right\} = \min\left\{Y_t + 3Y_t/20, \lceil{\mu/\alpha}\rceil\right\} = \min\left\{23Y_t/20, \lceil{\mu/\alpha}\rceil\right\}$. In other words, $Y_t$ increases by a factor of at least $23/20$ unless the value $\lceil \mu/\alpha \rceil$ has already been reached. Note that $\lceil{n/\ln(n)}\rceil$ such generations in succession are sufficient to reach a cover number of $v$ of at least $\lceil \mu/\alpha \rceil$ or a success occured, since $n/\ln(n) \cdot (23/20)^{n/\ln(n)} = \omega(\mu)$. Moreover, such a sequence of generations occurs with probability at least $1 - e^{-\Omega(n/\ln(n))}$ by a union bound on all these generations. By a further union bound on all $v \in \mathcal{A}$, we see that after $\lceil{n/\ln(n)}\rceil = \gamma$ generations a success occurred or the cover number of each $v \in F$ is at least $\lceil{\mu/\alpha}\rceil$ with probability $1-o(1)$. 
    
    Hence, in any case we see that with probability $1-o(1)$, after $\kappa:=64\alpha + 45\gamma + 20\alpha + \gamma = 84\alpha + 46\gamma$ generations, each $v \in F$ has cover number at most $\lceil{\mu/\alpha}\rceil$ with probability $1-o(1)$. If this does not happen, we repeat the arguments from either Case~1 or Case~2 for another period of $\kappa$ generations, including the preceding phase from Lemma~\ref{lem:cover-number-general-1} to cover $\mathcal{A}$ if necessary. The expected number of periods is $1+o(1)$, concluding the proof.
    \end{proof}

\textbf{Controlling the Exploration of Search Points}: First, we bound the spread of solutions in $O(n/\ln(n))$ generations.

\begin{lemma}
\label{lem:spread-of-solutions-1}
Consider \nsgaIII on $2$-OMM under the same conditions as in Lemma~\ref{lem:cover-number-NSGAIII}. Suppose that $\mu = \text{poly}(n)$ and let $c>0$ be a constant. Then, after $c n/\ln(n)$ generations, there is no $y \in P_t$ with $\ones{y} \geq 3n/4$ with probability $1-o(1)$.  
\end{lemma}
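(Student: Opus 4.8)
The plan is to track a drift-type quantity measuring how "close to $3n/4$ ones" the best individual in the population is, and show that after initialization (where w.h.p.\ everyone has at most, say, $5n/8 + o(n)$ ones) it takes more than $cn/\ln(n)$ generations to push any individual past the $3n/4$ threshold. First I would observe that by a Chernoff bound the initial population $P_0$ has $\max_{y \in P_0} \ones{y} \le 2n/3$ with probability $1 - e^{-\Omega(n)}$, so we may condition on this. The key structural fact is Lemma~\ref{lem:cover-number-NSGAIII}: once a Pareto-optimal fitness vector is covered it stays covered (with the required cover number), so the maximum number of ones among individuals in $P_t$ is non-decreasing; this makes "time to first reach $\ones{y} \ge 3n/4$" a well-defined first-hitting time and lets us reason about the front-most cell only.

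The heart of the argument is a per-generation progress bound on $m_t := \max_{y \in P_t} \ones{y}$ while $m_t$ lies in the band $[2n/3, 3n/4]$. To increase $m_t$, one must select as parent an individual attaining (or within $O(1)$ of) the current maximum — and by the cover-number control there are at most $\beta = O(\mu)$ such individuals (in fact we only need that the number of individuals within the top few cells is not too large, which follows because $\ones{y} \ge 2n/3$ forces the fitness into a narrow window and Lemma~\ref{lem:cover-number-NSGAIII}(4) bounds cover numbers) — and then flip a $0$-bit to $1$ without destroying progress. When $\ones{y} \in [2n/3, 3n/4]$, a parent has at least $n/4$ zero-bits but fewer than $n/3$ zero-bits, so the probability a single mutation nets a gain is $\Theta(1)$ per zero-bit present; crucially, the probability that a given generation produces any individual strictly beyond the current maximum is bounded by $O(\beta/\mu \cdot 1) = O(1)$ — a constant, not $o(1)$. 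Hence in each of $cn/\ln(n)$ generations the frontier advances by at most $O(\ln n)$ in expectation with high concentration (the gain in one mutation is stochastically dominated by a Poisson-like variable with constant mean, and advancing by $k$ in one step needs $k$ simultaneous bit flips, probability $n^{-\Omega(k)}$), so a union bound over $cn/\ln(n)$ generations shows no single generation jumps by more than, say, $\ln n / (2c')$, and the total advance over all $cn/\ln(n)$ generations is $O(n / \ln n \cdot \ln n) = O(n)$ with a controllable constant — which I would arrange, by taking the band width and constants appropriately (the band from $2n/3$ to $3n/4$ has width $n/12$), to be strictly less than $n/12$ with probability $1 - o(1)$.

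The main obstacle I anticipate is getting the constant right: a naive bound gives that the frontier advances by $\Theta(1)$ expected per generation, which over $cn/\ln(n)$ generations is only $\Theta(n/\ln n) = o(n)$ — already enough! So in fact the crude version suffices: the expected total advance is $O(n/\ln n)$, and since the gap to close is $\Omega(n)$, a simple Markov/concentration argument (e.g.\ the advances per generation are stochastically dominated by i.i.d.\ variables with constant mean and exponential tails, so their sum over $cn/\ln(n)$ terms is $O(n/\ln n)$ with probability $1 - e^{-\Omega(n/\ln n)}$, or at worst $1 - o(1)$ via Markov on the sum) shows the frontier stays below $3n/4$. The real subtlety is justifying that the per-generation advance probability is bounded by a constant independent of $\mu$ and $n$ uniformly over the band; this is where I would lean on Lemma~\ref{lem:cover-number-NSGAIII}(4) to bound how many copies of the top cells can exist, together with the observation that only cells within $O(\ln n)$ of $m_t$ can contribute to a one-generation advance past $m_t$ (anything further back would need $\Omega(\ln n)$ flips in one mutation, probability $n^{-\Omega(\ln n)}$), and each such cell, being Pareto-optimal and at distance $\Omega(n)$ from $1^n$, can host at most $O(\mu/n)$-ish copies or is simply bounded crudely by $\mu$ — and even the crude bound of "$\mu$ parents each producing an advance with probability $O(1/\mu)$" gives constant total, which is all we need.
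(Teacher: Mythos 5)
Your middle paragraph contains the paper's actual argument: cap the per-generation advance of the frontier at roughly $\ln(n)/(16c)$ by union-bounding over all $\mu\cdot cn/\ln(n)$ mutations the event that a single mutation flips that many zero-bits, and then observe that $cn/\ln(n)$ generations times a cap of $\ln(n)/(16c)$ cannot close a gap of $n/8$. The paper formalizes this as a pigeonhole on $d_t=\min_y\max\{3n/4-\ones{y},0\}$: to go from $d_0\ge n/8$ to $0$ in $2cn/\ln(n)$ generations, some generation must decrease $d_t$ by at least $\ln(n)/(16c)$, which forces $\ell=\lceil\ln(n)/(16c)\rceil$ simultaneous zero-bit flips, of probability at most $\binom{n}{\ell}n^{-\ell}\le 1/\ell!=n^{-\omega(1)}$. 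Notably, the paper needs neither cover numbers nor monotonicity of the frontier for this; your detour through Lemma~\ref{lem:cover-number-NSGAIII}(4) and the quantity $\beta$ is unnecessary.

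However, two of your supporting probability claims are wrong and one of them undermines the route you ultimately settle on. First, ``advancing by $k$ in one step needs $k$ simultaneous bit flips, probability $n^{-\Omega(k)}$'' is false: flipping \emph{some} $k$ of the at least $n/4$ zero-bits of a parent has probability $\Theta(\binom{n/4}{k}n^{-k})=\Theta(4^{-k}/k!)$, which is a constant for constant $k$. The cap argument only works because one takes $k=\Theta(\ln n)$, where $1/k!=e^{-\Theta(\ln n\,\ln\ln n)}$ is superpolynomially small and beats the $\mathrm{poly}(n)$ union bound; with $n^{-\Omega(k)}$ as stated you would wrongly conclude that a constant cap suffices. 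Second, the ``crude version'' you prefer rests on the per-generation frontier advance having constant mean with exponential tails. It does not: the advance in one generation is the maximum over $\mu$ offspring of the individual gains, and since $\Pr(\text{advance}\ge k)\le\min\{1,\mu/k!\}$, its expectation is $\Theta(\ln\mu/\ln\ln\mu)=\Theta(\ln n/\ln\ln n)$, not $O(1)$, and the claimed $1-e^{-\Omega(n/\ln n)}$ concentration does not follow. (The conclusion happens to survive via Markov, since $cn/\ln(n)\cdot O(\ln n/\ln\ln n)=o(n)$, but only with the corrected mean and only with failure probability $O(1/\ln\ln n)$, not exponentially small.) To make the proof sound, drop the constant-mean route and carry out the cap-at-$\Theta(\ln n)$ argument with the correct $1/k!$ tail, which is exactly what the paper does.
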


\begin{proof}
    Let $d_t \coloneqq \min\{\max\{3n/4-\ones{y},0\} \mid y \in P_t\}$. By a classical Chernoff bound each individual $x$ satisfies $3n/8 < \ones{x} < 5n/8$ with probability $1-\mu e^{-\Omega(n)} = 1-o(1)$ after initialization. Suppose that this happens.   
    Then $d_0 \geq n/8$ and therefore, in order to create an individual $y$ with $\ones{y} \geq 3n/4$ within $\lceil{cn/\ln(n)}\rceil \leq 2cn/\ln(n)$ generations, it is necessary that $d_t$ reaches $0$, particularly decreases by at least $\ln(n)/(16c)$ in one such iteration. This requires that at least $\ell:=\lceil{\ln(n)/(16c)}\rceil$ many zero bits are flipped simultaneously in one individual. The latter happens with probability at most $\binom{n}{\ell} \left(\frac{1}{n}\right)^{\ell} = \frac{n!}{\ell!(n-\ell)!n^\ell} \leq \frac{1}{\ell!} \leq \frac{e^\ell}{\ell^\ell} = e^{-\omega(\ell)} = e^{-\omega(\ln(n))}$ in one single trial where the last inequality is due to Stirling's formula. By a union bound on at most $\mu \lceil{c n/\ln(n)}\rceil$ mutation steps after $cn/\ln(n)$ generations, we see that the probability is $o(1)$ to decrease $d_t$ by at least $\ell$ one time within $\lceil{c n/\ln(n)}\rceil$ generations (since $\mu = \text{poly}(n)$), concluding the proof.
\end{proof}

We now bound the exploration of search points on an interval of the form $[n - n^b, n - n^a]$ for constants $0 \leq a < b \leq 3/4$ towards the all-one string by providing a lower bound on the number of generations required to traverse this interval.

   \begin{lemma}
    \label{lem:spread-of-solutions-2}
    Consider \nsgaIII on $2$-OMM under the same conditions as in Lemma~\ref{lem:cover-number-NSGAIII}. Let $0 \leq a<b \leq 3/4$ be two constants. Assume that the maximum cover number is at most $\beta=o(n^{1-b})$. Suppose every $x \in P_t$ satisfies $\ones{x} \leq n-n^b$. Then with probability $1-o(1)$ NSGA-III requires more than $(b-a)n\ln(n)/(32e\beta)$ generations to create an individual $x$ with $\ones{x} \geq n-n^a$. Hence, the expected number of generations is at least $\Omega(n\ln(n)/\beta)$.
    \end{lemma}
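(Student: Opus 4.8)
The goal is a lower bound on the number of generations to move the closest individual from distance $\ge n^b$ to distance $\le n^a$ from $1^n$. I would track the potential $d_t := \min\{\,n-\ones{y} \mid y \in P_t\,\}$, i.e. the Hamming distance to $1^n$ of the individual closest to the optimum, which starts at $d_0 \ge n^b$ by hypothesis, and I want to show that with probability $1-o(1)$ it takes more than $T := (b-a)n\ln(n)/(32e\beta)$ generations before $d_t$ drops to $n^a$. The key structural fact I would exploit is that, by Lemma~\ref{lem:cover-number-NSGAIII}(1) (resp. its drift-resisting consequences), $d_t$ is non-increasing, so the process only ever moves toward $1^n$; and that, by the maximum-cover-number hypothesis $\beta = o(n^{1-b})$, at most $\beta$ individuals in $P_t$ share the minimal $\ones{}$-value, so in one generation the number of ``useful'' parents (those at distance $d_t$, i.e. those whose mutation can decrease the minimum) is at most $\beta$. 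Hence in a single mutation step the probability of selecting a useful parent is at most $\beta/\mu$.

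The heart of the argument is a multiplicative/additive drift lower bound \emph{in reverse}: I would show that in each generation the \emph{expected decrease} of $d_t$ is $O(\beta/n)$ (as long as $d_t$ is still somewhat away from $n^a$, say $d_t \le n^{3/4} \le n^b$, so that the closest individual still has many zero bits, hence flipping a single zero-bit is the dominant event and multi-bit flips contribute only lower-order terms). Concretely: conditioned on selecting a useful parent $y$ with $\ones{y}=n-d_t$, the number of flipped zero bits is dominated by a Binomial$(d_t,1/n)$, whose expectation is $d_t/n < 1$; combining with the $\le \beta/\mu$ selection probability per trial and $\mu$ trials per generation gives $\E{d_t - d_{t+1} \mid \mathcal F_t} \le \beta \cdot (d_t/n + O(d_t^2/n^2)) = O(\beta d_t/n)$, and since $d_t$ ranges in $[n^a, n^b]$ this is $O(\beta n^{b-1})=o(1)$ per generation — but for the lower bound I actually want the cruder bound that the decrease is at most, in expectation, some $c'\beta/n$ per flipped-bit-event, so that the total decrease over $T$ generations is, in expectation, $O(T\beta/n) = O((b-a)\ln n /n \cdot n) = O((b-a)\ln n)$... this is too weak directly, so instead I would run the argument on a logarithmic scale. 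Split the interval $[n^a,n^b]$ into the $\Theta((b-a)\ln n)$ sub-phases where $d_t \in [2^{-(k+1)}n^{b}, 2^{-k}n^{b}]$ (or use powers with base depending on $e$); within sub-phase $k$ the per-generation decrease probability is $O(\beta d_t / (n d_t)) $... rather, the expected number of generations to halve $d_t$ starting from value $D$ is, by the additive-drift / occupation-time bound applied to the ``useful hit'' events, at least $\Omega(n/(β))$ — because each generation decreases $d_t$ by at most its current value times $O(\beta/n)$ in expectation, one needs $\Omega(n\ln 2/\beta)$ generations in expectation to reduce it by a constant factor (a standard multiplicative-drift \emph{lower} bound, e.g. the lower-bound multiplicative drift theorem, needs that single steps cannot jump too far — which is exactly where the ``no large jump'' estimate from the proof of Lemma~\ref{lem:spread-of-solutions-1} is reused, $\Pr(\ge \ell \text{ bits flip in one trial})= e^{-\omega(\ln n)}$, ruling out that $d_t$ is crossed in one shot). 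Summing $\Theta((b-a)\ln n/\ln 2)$ such factor-of-$2$ reductions, each costing $\Omega(n/\beta)$ generations, gives the claimed $\Omega((b-a)n\ln(n)/\beta)$, and tuning constants yields the stated $(b-a)n\ln(n)/(32e\beta)$; the concentration (turning ``expected $\Omega$'' into ``w.p. $1-o(1)$'') comes from a Chernoff/occupation-time bound on the number of generations in which a useful parent is ever selected, which is $\text{Bin}$-dominated with mean $\le T\beta/\mu$ and must reach $\Omega(n^b)$ total flipped bits — extremely unlikely in $T$ generations since $T\beta/\mu \cdot O(1) \ll n^b$ when $\mu$ is in the claimed range, actually one must be careful here and instead count ``useful-parent generations'' directly: their number in $T$ generations is dominated by $\text{Bin}(\mu T, \beta/\mu)$ with mean $\beta T = (b-a)n\ln(n)/(32e)$, and each contributes at most $O(1)$ expected decrease but the \emph{total} decrease needed is $n^b-n^a$; a second-moment/Chernoff argument shows the realized total decrease is $\le n^b - n^a$ with probability $1-o(1)$, i.e. the target is not reached.

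\textbf{Main obstacle.} The delicate point is the \emph{lower} bound on the runtime: upper bounds on decrease are easy, but to conclude that the optimum is \emph{not} reached I must control the tail of a sum of (dependent, history-conditioned) step-sizes, each of which is usually $0$ and occasionally $1$ or more, summed over $T$ generations. The cleanest route, which I would pursue, is to bound the number $N$ of generations in which a distance-$d_t$ individual is chosen as a parent \emph{and} at least one of its zero-bits flips — this $N$ is stochastically dominated by $\text{Bin}(\mu T, \beta/(en\mu)\cdot d_{\max}) $-type quantity, hence has mean $O((b-a)\ln n)$... no: the subtlety is that the selection probability $\beta/\mu$ does not shrink as $d_t$ shrinks, yet the decrease-per-event does not shrink either. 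So the honest accounting is: total expected decrease over $T$ generations $\le \sum_t \E{d_t-d_{t+1}}$, and bounding each term by $O(\beta d_t/n)$ makes this a self-referential (Grönwall-type) inequality whose solution is $\E{d_T} \ge d_0 \cdot e^{-O(\beta T/n)} = n^b \cdot e^{-O((b-a)\ln n)} \ge n^{a}\cdot(\text{const})$ once the constant in $O$ is smaller than $1$ — which is exactly what the $1/(32e)$ buys. Converting this expectation statement to a high-probability statement is then the remaining technical step, handled by noticing that the ``bad event'' $d_T < n^a$ forces an atypically large total decrease, whose probability is bounded by a Chernoff bound on the underlying binomially-dominated count of useful single-bit-flip events together with the ``no $\ell$-bit jump'' estimate borrowed verbatim from the proof of Lemma~\ref{lem:spread-of-solutions-1}. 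I expect this concentration step, and in particular correctly handling the mild dependence introduced by $\beta$ itself fluctuating (it is only assumed $\le \beta$ at the start, but by Lemma~\ref{lem:cover-number-NSGAIII}(4) it is non-increasing, so the bound $\le \beta$ persists), to be the part that needs the most care. The final sentence, $\Omega(n\ln n/\beta)$ in expectation, follows immediately from the high-probability bound since the runtime is nonnegative.
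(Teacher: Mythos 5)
Your overall strategy matches the paper's: track the distance $d_t$ of the closest individual to $1^n$ (the paper uses the equivalent capped quantity $Y_t=\max_{x\in P_t}\ones{x}$), use the cover-number bound $\beta$ together with a union bound over parents at distance $d_t+i$ flipping $i+1$ zero bits to show that the per-generation probability of decreasing $d_t$ is $O(\beta d_t/n)$, exclude large one-step jumps, and extract the $\ln n$ factor from the resulting harmonic sum. The genuine gap is in the concentration step, which you yourself flag as the main obstacle but do not resolve. The Gr\"onwall-type bound $\E{d_T}\geq d_0\,e^{-O(\beta T/n)}\geq \mathrm{const}\cdot n^a$ is a statement about expectations only and does not yield $\Pr(d_T\geq n^a)=1-o(1)$; and the Chernoff bound you propose on counts of ``useful'' events cannot close this hole as described. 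Over $T=(b-a)n\ln(n)/(32e\beta)$ generations, the expected number of trials selecting a closest-so-far parent is $\beta T=\Theta(n\ln n)$, and even the expected number of generations in which $d_t$ actually decreases is at most of order $\beta\cdot(n^b/n)\cdot T=\Theta(n^b\ln n)$; both exceed the required total decrease $n^b-n^a$ by a $\ln n$ factor, so no tail bound on these counts alone can show that the total decrease stays below $n^b-n^a$. One must exploit that the success probability shrinks proportionally to the \emph{current} $d_t$ inside the concentration argument, not merely in expectation.

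The paper does this by partitioning the range of $Y_t$ into windows of width $8$, showing that the sojourn time in window $\ell$ stochastically dominates a geometric random variable with parameter $p_\ell=1.5e\beta k/n$ (where $k$ is the distance to $1^n$ at that window), and applying Witt's concentration inequality for sums of independent geometrically distributed random variables to $Z=\sum_\ell Z_\ell$, whose mean is the harmonic sum $\geq(b-a)n\ln(n)/(16e\beta)$ and whose variance proxy $s=\sum_\ell p_\ell^{-2}=O(n^2/\beta^2)$ is small enough that $\Pr(Z\leq\E{Z}/2)=o(1)$. You do mention in passing the correct alternative tool, a multiplicative drift \emph{lower-bound} theorem, whose step-size condition is exactly your no-large-jump estimate; committing to that theorem and verifying its hypotheses would also work. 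As written, however, your proof wavers between that valid route and the invalid count-based Chernoff route, so the high-probability claim, which is the actual content of the lemma, is not established.
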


    \begin{proof}
       Consider $Y_t \coloneqq \max\{\max\{\ones{x},n-\lceil{n^b}\rceil\} \mid x \in P_t\}$. If all individuals $x$ satisfy $\ones{x} \leq n-\lceil{n^b}\rceil$ (which is the case at the beginning) then $Y_t = n-\lceil{n^b}\rceil$ and we created an $x$ with $\ones{x} \geq n-n^a$ if $Y_t \geq n - \lfloor{n^a}\rfloor$. At first we bound the probability $p^*$ to increase $Y_t$ by at least $8$ in one generation from above as follows. In one single trial for each $i \in \{0, \ldots , Y_t\}$ one can choose an $x \in P_t$ with $\ones{x}=Y_t-i$ (i.e. $\zeros{x}=n-Y_t+i$) (prob. at most $\beta/\mu$) and then flip $i+8$ zero bits (prob. at most $\binom{n-Y_t+i}{i+8} \cdot 1/n^{i+8}$). By a union bound on all $i$, we obtain that $Y_t$ increases by at least $8$ in a single trial with probability at most
        $\frac{\beta}{\mu} \sum_{i=0}^{Y_t} \binom{n-Y_t+i}{i+8} \frac{1}{n^{i+8}} \leq \frac{\beta}{\mu} \sum_{i=0}^{Y_t} \binom{\lceil{n^b}\rceil+i}{i+8} \frac{1}{n^{i+8}}
        \leq \frac{\beta}{\mu} \left(\frac{\lceil{n^b}\rceil}{n}\right)^8 \sum_{i=0}^{Y_t} \frac{(\lceil{n^b}\rceil+i) \cdot \ldots \cdot (\lceil{n^b}\rceil+1)}{n^i(i+8)!}
        \leq \frac{\beta}{\mu} \left(\frac{\lceil{n^b}\rceil}{n}\right)^8$(where we used $\lceil{n^b}\rceil+i \leq 2n$ and $\sum_{i=0}^{Y_t} \frac{2^i}{(i+8)!} \leq 1$). Hence, by a union bound on $\mu$ single trials, we obtain the inequality $p^* \leq \beta \cdot (\lceil{n^b}\rceil/n)^8 \leq 256\beta/n^2$ (since $b \leq 3/4$ as well as $\lceil{r}\rceil \leq 2r$ for all $r \geq 1$ is satisfied). Again by a union bound, $Y_t$ changed by at least $8$ after $(b-a)n \ln(n)/(32a\beta)$ generations with probability $o(1)$. So we assume that $Y_t$ is never changed by at least $8$ and for $Y_t \in [n-\lceil{n^b}\rceil, \ldots , n - \lfloor{n^a}\rfloor]$ and natural $1 \leq \ell \leq (\lceil{n^b}\rceil-\lfloor{n^a}\rfloor)/8$ let $X_\ell$ be the random variable which counts the number of generations with $Y_t \in \{n-\lceil{n^b}\rceil+8(\ell-1), \ldots , n-\lceil{n^b}\rceil+8\ell-1\}$. Now, for $k:=k(n,\ell):=\lceil{n^b}\rceil - 8(\ell-1)$ we justify that $X_\ell$ stochastically dominates a geometrically distributed random variable $Z_\ell$ with success probability $p_\ell = 1.5e\beta k/n$:
        
        A necessary condition that $Y_t$ leaves $\{n-k, \ldots , n-\lceil{n^b}\rceil+8\ell-1\}$ is that $Y_t$ increases by one in a generation which happens with probability at most $
        \frac{\beta}{\mu} \sum_{i=0}^{Y_t} \binom{n-Y_t+i}{i+1} \frac{1}{n^{i+1}} \leq \frac{\beta}{\mu} \sum_{i=0}^{Y_t} \binom{k+i}{i+1} \frac{1}{n^{i+1}} \leq \frac{\beta}{\mu} \frac{k}{n} \sum_{i=0}^{Y_t} \frac{(k+i) \cdot \ldots \cdot (k+1)}{n^i(i+1)!} \leq \frac{1.5e\beta k}{\mu n}$ (by choosing a parent $x$ with $\ones{x} = Y_t-i$ and then flipping $i+1$ zero bits for $i \in \{0, \ldots , Y_t\}$).
        For the last inequality we used $k+i \le (1+\ln(1.5))n$ for all $i \in \{0, \ldots , Y_t\}$ and $n$ sufficiently large, and $\sum_{i=0}^{Y_t} \frac{(1+\ln(1.5))^i}{(i+1)!} \leq \sum_{i=0}^{\infty} \frac{(1+\ln(1.5))^i}{(i+1)!} = e^{1+\ln(1.5)} = 1.5e$.
        Then apply a union bound on $\mu$ trials to finish the justification.
        
        This implies that for $\delta:=\delta(a,b,n):=\lfloor{\frac{\lceil{n^b}\rceil-\lfloor{n^a}\rfloor}{8}}\rfloor$ the number $T$ of generations until $Y_t \geq n-\lfloor{n^a}\rfloor$ (which is at least $\sum_{\ell=1}^\delta X_\ell$) stochastically dominates the independent sum $Z \coloneqq \sum_{\ell=1}^{\delta} Z_\ell$ of geometrically distributed random variables $Z_\ell$. Note also that $\ln(n) \leq \sum_{i=1}^n 1/i \leq \ln(n)+1$ and therefore $\sum_{i=1}^n 1/i - \sum_{i=1}^q 1/i \geq \ln(n)-(\ln(q)+1) = \ln(n/q)-1$ for $q \in [n]$. Therefore, we obtain
        $\E{Z} = \sum_{\ell=1}^\delta \E{Z_\ell} = \sum_{\ell=1}^\delta \frac{1}{p_\ell} =
        \sum_{\ell=1}^{\delta} \frac{n}{1.5e\beta(\lceil{n^b}\rceil - 8(\ell-1))}
        =\frac{n}{12e\beta} \sum_{\ell=0}^{\delta-1} \frac{1}{\lceil{n^b}\rceil/8 - \ell} \geq \frac{n}{12e\beta} \sum_{\ell=0}^{\delta-1} \frac{1}{\gamma - \ell}
        \geq \frac{n}{12e\beta} \left(\sum_{\ell=0}^{\gamma-1} \frac{1}{\gamma-\ell}-\sum_{\ell=\delta}^{\gamma-1} \frac{1}{\gamma - \ell} \right)
         \geq \frac{n}{12e\beta} \left(\sum_{\ell=1}^{\gamma} \frac{1}{\ell}-\sum_{\ell=1}^{\gamma-\delta} \frac{1}{\ell} \right)
        \geq \frac{n}{12e\beta} \left(\ln\left(\frac{\gamma}{\gamma-\delta}\right) -1\right)
        \geq \frac{n}{12e\beta} \left(\ln\left(\frac{\lceil{n^b}\rceil/8}{\lfloor{n^a}\rfloor/8+1}\right) -1\right)
        \geq \frac{n}{12e\beta} \left(\ln \left(\frac{n^b}{n^a+8}\right) - 1\right)
        \geq \frac{(b-a)n \ln(n)}{16e\beta}$ for $n$ sufficiently large. Then, under the condition that $Y_t$ never changes by at least $8$ within $(b-a)n \ln(n)/(32e\beta)$ generations, we see by Theorem~1 in~\cite{Witt14} that for $\lambda:=\expect{Z}/2$ and $s:= \sum_{\ell=1}^{\delta} 1/p_\ell^2 = \sum_{\ell=1}^{\delta} \frac{n^2}{2.25e^2\beta^2(\lceil{n^b}\rceil-8(\ell-1))^2} \leq \sum_{j=1}^{\infty} \frac{n^2}{2.25e^2\beta^2j^2} \leq \frac{n^2 \pi^2}{13.5 e^2 \beta^2}$ (due to $\sum_{i=1}^\infty 1/i^2 = \pi^2/6$) the inequality $\Pr(T \leq \frac{(b-a)n \ln(n)}{32e\beta}) \leq \Pr(Z \leq \expect{Z}/2) = \Pr(Z \leq \E{Z}-\expect{Z}/2) \leq \exp(-\lambda^2/(2s)) = o(1)$ holds. This proves the lemma with the law of total probability.
    \end{proof}

\section{A Lower Runtime Bound}

In this section, we establish the desired lower bound on the runtime of \nsgaIII on the $2$-OMM problem by putting together the results from the previous section. 

\begin{theorem}
\label{thm:lower-bound}
Consider \nsgaIII on $2$-OMM under the same conditions as in Lemma~\ref{lem:cover-number-NSGAIII}. Further suppose that $\mu \in O(\ln(n)^cn)$ for a constant $0 < c<1$. Then the expected number of generations to cover the whole Pareto front is at least $\Omega(n^2 \ln(n)/\mu)$.  
\end{theorem}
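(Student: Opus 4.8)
The plan is to lower-bound the number of generations until \nsgaIII first produces a search point $x$ with $\ones{x}\ge n-n^{1/16}$; this suffices, since the Pareto-optimal value $(n,0)$ has the unique preimage $1^n$, so the whole front is covered only after $1^n$ enters the population. I would run the argument in phases that interleave two mechanisms: driving the maximum cover number $\beta$ down (by covering and then evenly spreading on ever larger middle subsets of the front, via Lemmas~\ref{lem:cover-number-general-1} and~\ref{lem:cover-number-general-2}), while the population slowly creeps towards $1^n$. The leverage comes from Lemmas~\ref{lem:spread-of-solutions-1} and~\ref{lem:spread-of-solutions-2}: moving the closest individual from distance $n^{b}$ to distance $n^{a}$ (constants $a<b\le 3/4$) costs $\Omega(n\ln(n)/\beta)$ generations, which leaves exactly enough room for one cover-and-spread step. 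Once $\beta$ has been pushed to its pigeonhole floor $\Theta(\mu/n)$ while the closest individual is still at distance $\ge n^{1/8}$ from $1^n$, one last application of Lemma~\ref{lem:spread-of-solutions-2} with $b=1/8$, $a=1/16$ will show that this final stretch alone needs $\Omega(n\ln(n)/(\mu/n))=\Omega(n^2\ln(n)/\mu)$ generations.

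First I would handle the startup. With probability $1-o(1)$ every initial individual $x$ has $\ones{x}\in(3n/8,5n/8)$, and by Lemma~\ref{lem:spread-of-solutions-1} no individual reaches $\ones{x}\ge 3n/4$ during the first $c_1 n/\ln(n)$ generations for a suitable constant $c_1$. Inside this window I would cover and spread, by Lemma~\ref{lem:cover-number-general-2}, on a middle set $\mathcal{A}_0$ of cardinality $\alpha_0=\Theta(n/\ln(n))$: this costs only $O(n/\ln(n))$ generations (the term $\gamma_0=\lceil\mu/\alpha_0\rceil$ is polylogarithmic precisely because $\mu=O(\ln(n)^c n)$), and afterwards the maximum cover number is at most $\beta_0:=\lceil\mu/\alpha_0\rceil=O(\ln(n)^{c+1})$ with probability $1-o(1)$. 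From then on $\beta$ never increases (Lemma~\ref{lem:cover-number-NSGAIII}(4)) and $\mathcal{A}_0$ stays covered (Lemma~\ref{lem:cover-number-NSGAIII}(2)); in particular a middle individual always exists, so the covering estimates of Lemmas~\ref{lem:cover-number-general-1}/\ref{lem:cover-number-general-2} remain valid when reused (as an extension of the already-covered set) from the start of each later phase.

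Next comes the iterated reduction. I would fix constants $3/4=\delta_0>\delta_1>\dots>\delta_k=1/8$ with each gap $\delta_{i-1}-\delta_i$ a small positive constant and keep the invariant that at the start of phase $i\in[k]$ the cover number is at most $\beta_{i-1}$ (polylogarithmic, hence $o(n^{1-\delta_{i-1}})$) and every individual is at distance $\ge n^{\delta_{i-1}}$ from $1^n$. By Lemma~\ref{lem:spread-of-solutions-2} (with $b=\delta_{i-1}$, $a=\delta_i$) there is then a window of $L_i:=\Omega((\delta_{i-1}-\delta_i)n\ln(n)/\beta_{i-1})$ generations in which no individual reaches distance $<n^{\delta_i}$; inside it I would cover and spread, by Lemma~\ref{lem:cover-number-general-2}, on a set $\mathcal{A}_i$ of cardinality $\alpha_i=\lceil c_2 n\ln(n)/\beta_{i-1}\rceil\le 3n/8$, with $c_2$ chosen small enough that the cost $84\alpha_i+46\gamma_i$ stays below $L_i$, yielding $\beta_i:=\lceil\mu/\alpha_i\rceil=O(\beta_{i-1}/\ln(n)^{1-c})=o(\beta_{i-1})$ and restoring the invariant for phase $i+1$. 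Since each phase shrinks $\beta$ by an $\omega(1)$ factor and $\beta_0$ is polylogarithmic, a constant number of phases brings $\beta$ to $O(\ln n)$, and one further phase run with $\alpha=3n/8$ (which still fits, as then $L=\Omega(n)$) brings $\beta$ to $\lceil 8\mu/(3n)\rceil=O(\mu/n)$. Because $k=O(1)$, the consumed exponent budget $3/4-1/8$ is a constant below $3/4$, and the $O(1)$ many failure probabilities sum to $o(1)$.

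Finally I would apply Lemma~\ref{lem:spread-of-solutions-2} once more with $b=1/8$, $a=1/16$ (legitimate since $\beta=O(\mu/n)=O(\ln(n)^c)=o(n^{7/8})$): with probability $1-o(1)$, more than $\Omega(n^2\ln(n)/\mu)$ further generations pass before any $x$ with $\ones{x}\ge n-n^{1/16}$, hence $1^n$, hence the whole front, appears. Intersecting the $O(1)$ many good events, each of probability $1-o(1)$, the expected number of generations to cover the front is at least $(1-o(1))\cdot\Omega(n^2\ln(n)/\mu)=\Omega(n^2\ln(n)/\mu)$. I expect the main obstacle to be the phase bookkeeping: one must choose the exponents $\delta_i$ and sizes $\alpha_i$ so that simultaneously (i) each cover-and-spread fits inside its phase window $L_i$, (ii) $\beta$ truly reaches the floor $\Theta(\mu/n)$, and (iii) only a constant fraction of the exponent range $[1/16,3/4]$ is spent. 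This balance works exactly because $\mu=O(\ln(n)^c n)$ with $c<1$: from the second phase on it keeps $\mu/\alpha_i$ — hence $\beta_i$ — polylogarithmic and shrinking by an $\omega(1)$ factor per phase, so only $O(1)$ phases, each eating a constant exponent gap, are needed. A secondary technical point is that Lemmas~\ref{lem:cover-number-general-1}/\ref{lem:cover-number-general-2} are phrased for a run from random initialization, so one has to note that the middle of the front stays covered and re-run the same estimates as an extension step at the start of each phase.
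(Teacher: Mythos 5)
Your proposal is correct and follows essentially the same route as the paper's proof: a startup phase using Lemma~\ref{lem:spread-of-solutions-1} and Lemma~\ref{lem:cover-number-general-2} with $\alpha=\Theta(n/\ln n)$ to get the cover number down to $O(\ln(n)^{1+c})$, then a constant number of interleaved phases where Lemma~\ref{lem:spread-of-solutions-2} supplies a window of $\Omega(n\ln(n)/\beta)$ generations inside which Lemma~\ref{lem:cover-number-general-2} shrinks $\beta$ by a $\ln(n)^{1-c}$ factor, down to the floor $\Theta(\mu/n)$, followed by one last application of Lemma~\ref{lem:spread-of-solutions-2} with $b=1/8$, $a=1/16$. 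Your bookkeeping (constant exponent gaps, $O(1)$ phases since $c<1$, union bound over the $O(1)$ failure events) matches the paper's choice of $\ell=\lceil(2c+1)/(1-c)\rceil$ repetitions and exponents $3/4=b_0>b_1>\dots>1/8$.
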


\begin{proof}
Fix a constant $\chi>0$ such that $\mu \leq \chi \ln(n)^c n$ for $n$ sufficiently large. At first we see by Lemma~\ref{lem:cover-number-general-1} that with probability $1-o(1)$ there is no individual $y$ in $P_t$ with $\ones{y} \geq 3n/4$ within $130\lfloor{n/\ln(n)}\rfloor$ generations.
Further, by Lemma~\ref{lem:cover-number-general-2} on $\alpha = \lfloor{n/\ln(n)}\rfloor$ we obtain that after $130\alpha$ generations the maximum cover number is at most $\lceil{\mu / \alpha}\rceil \leq \frac{\mu}{n/\ln(n)-1} + 1 \leq 2\mu \ln(n)/n \leq 2\ln(n)^{1+c}$ with probability $1-o(1)$. Suppose that this happens. We now apply Lemma~\ref{lem:spread-of-solutions-2} with $b=3/4$ and $a=1/2$ to obtain with probability $1-o(1)$ that after further $(b-a) n\ln(n)/(32 e \cdot 2 \ln(n)^{1+c}) = n/(256 e \ln(n)^c) = d_0n/\ln(n)^c$ generations for $d_0=1/(256 e)$, no solution $x$ with $\ones{x} \geq n-n^{1/2}$ is created. If this happens, apply Lemma~\ref{lem:cover-number-general-2} on that number of generations for $\alpha = \lfloor{d_0n/(130\ln(n)^c)}\rfloor$ (note that $130 \alpha \leq d_0 n/\ln(n)^c$) to obtain for $e_0:=260\chi/d_0$ that with probability $1-o(1)$ the maximum cover number is at most $\lceil{\mu/\alpha}\rceil \leq \lceil{\chi \ln(n)^c n/\alpha}\rceil \leq e_0 \ln(n)^{2c} = \max\{e_0\ln(n)^{2c}, 16\mu/(3n)\}$ for $n$ sufficiently large (the latter equality holds due to $e_0 \ln(n)^{2c} \in \omega(\mu/n)$).

Suppose that these two happen. In the following, we iteratively reduce the maximum cover number as the population approaches the extreme solution $1^n$. To this end, let $\ell:=\lceil{(2c+1)/(1-c)}\rceil \in O(1)$ and suppose that for $j \in \{0, \ldots , \ell-1\}$ there are constants $0 < b_j < 1/2$ and $d_j,e_j \geq 0$ such that after $d_jn\ln(n)^j/\ln(n)^{(2+j-1)c}$ generations, no solution $x$ with $\ones{x} \geq n-n^{b_j}$ is created, that 
$(\ln(n))^{(2+j)c - j} = \omega(\mu/n)$ and the maximum cover number is at most $\beta = e_j (\ln(n))^{(2+j)c - j} = \max\{e_j (\ln(n))^{(2+j)c - j},16\mu/(3n)\}$ (where the case $j = 0$ already occurred). Now fix a further constant $b_{j+1}$ with $1/8 < b_{j+1} < b_j$. Then again by Lemma~\ref{lem:spread-of-solutions-2} we see that with probability $1-o(1)$ in $
\frac{(b_j-b_{j+1}) n \ln(n)}{32e \cdot \beta} = \frac{(b_j-b_{j+1}) n \ln(n)}{32e \cdot e_j (\ln(n))^{(2+j)c - j}} = \frac{d_{j+1} n \ln(n)^{j+1}}{\ln(n)^{(2+j)c}}$ generations for $d_{j+1}=\frac{b_j-b_{j+1}}{32e \cdot e_j}$ no solution $x$ with $\ones{x} \geq n-n^{b_{j+1}}$ is created. After this time, by Lemma~\ref{lem:cover-number-general-2} on $\alpha = \min\{\lfloor{\frac{d_{j+1}n\ln(n)^{j+1}}{(130\ln(n)^{(2+j)c})}}\rfloor,\lfloor{3n/8}\rfloor\}$, the maximum cover number is at most $\lceil{\mu/\alpha}\rceil \leq \max\{\frac{260 \mu \ln(n)^{(2+j)c}}{d_{j+1} n \ln(n)^{j+1}} , \frac{16\mu}{3n}\} \leq \max\{\frac{260 \chi n \ln(n)^{(3+j)c}}{d_{j+1}n\ln(n)^{j+1}},\frac{16\mu}{3n}\} = \max\{\frac{e_{j+1}\ln(n)^{(3+j)c}}{\ln(n)^{j+1}},\frac{16\mu}{3n}\}$ for $e_{j+1}:=260\chi/d_{j+1}$ with probability $1-o(1)$. If $\ln(n)^{(3+j)c-(j+1)} = \omega(\mu/n)$, we increase $j$ by one and repeat this argument. We stop when $\ln(n)^{(3+j)c-(j+1)} = O(\mu/n)$. Since $(3+\ell)c-(\ell+1) \leq 0$, we have at most $\ell= O(1)$ such repetitions. After the last repetition we have that $\alpha = \Omega(n)$. Hence, by applying a union bound on all repetitions, we conclude that with probability $1-o(1)$, there exists a generation $t^{\text{spread}}$ such that no individual $x$ with $\ones{x} \leq n-n^{1/8}$ is created and the maximum cover number is at most $e_{\text{spread}}\mu/n$ for a constant $e_{\text{spread}}>0$. Suppose this event occurs, and apply Lemma~\ref{lem:spread-of-solutions-2} once more with $b=1/8$ and $a=1/16$. This yields that, after $\Omega(n \ln(n)/(e_{\text{spread}}\mu/n)) = \Omega(n^2 \ln(n)/\mu)$ generations in expectation (from time $t^{\text{spread}}$ onward), a search point $x$ with $\ones{x} \geq n-n^{1/16}$ is created, concluding the proof.
\end{proof}

\section{An Improved Upper Runtime Bound}

To complement our analysis, we establish an improved upper bound on the expected runtime of NSGA-III on $m$-OMM for a constant number of objectives $m$. Our approach closely follows the methodology provided by~\cite{OprisNSGAIII}, with the added consideration of the cover number.

\begin{theorem}
\label{thm:Runtime-Analysis-NSGA-III-mOMM}
Consider \nsgaIII on $\mOMM$ for a constant number $m$ of objectives under the same conditions as in Lemma~\ref{lem:cover-number-NSGAIII} with population size $\mu \geq (2n/m+1)^{m/2} =:S_m$. 
Then a Pareto-optimal set of $\mOMM$ is found in expected $O(\min\{S_m n\ln n/\mu + n\mu/S_m, n \ln(n)\})$ generations or, in other words, in expected $O(\min\{S_m n\ln n + n\mu^2/S_m, \mu n \ln(n)\})$ fitness evaluations.
\end{theorem}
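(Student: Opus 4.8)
The plan is to revisit the covering argument of~\cite{OprisNSGAIII} that yields the $O(n\ln n)$ bound and to carry cover numbers through it, so as to exploit the fact — central to this paper — that \nsgaIII distributes individuals evenly over the covered part of the Pareto front. Two facts from Lemma~\ref{lem:cover-number-NSGAIII} are used repeatedly: since every point of \mOMM is Pareto-optimal and a maximum set of mutually incomparable solutions has size $S_m=(2n/m+1)^{m/2}\le\mu$, part~(2) makes coverage monotone (a covered fitness vector stays covered forever), and part~(4) makes the maximum cover number non-increasing. In particular there are never more than $S_m$ covered vectors, so by pigeonhole at least one of them carries $\ge\mu/S_m$ copies, and by part~(3) a cover number, once dropped below some value, can never be exceeded elsewhere.

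Building a Pareto-optimal set means covering all $S_m$ fitness vectors. Following~\cite{OprisNSGAIII}, the covered vectors form an expanding region of the grid-shaped front, and the task reduces to advancing an exploration frontier towards each of the $O(1)$ extreme corners (e.g.\ the all-ones point, where every odd objective is maximal). The refinement over~\cite{OprisNSGAIII} is a bound on the rate of advance in terms of the cover number $k$ of the frontier vector: if that vector still admits $z$ bit-flips moving strictly towards the corner, then one such flip is produced in a generation with probability at least $1-(1-\tfrac{k}{\mu}\cdot\tfrac{z}{en})^{\mu}\ge 1-e^{-kz/(en)}$. The key claim is $k=\Omega(\mu/S_m)$, which I would obtain by lifting the spreading analysis behind Lemmas~\ref{lem:cover-number-general-1} and~\ref{lem:cover-number-general-2} from $2$-\OMM to \mOMM, using that the selection procedure (Algorithm~\ref{alg:Survival-Selection}) always fills the reference point with the fewest selected associated individuals, so the $\mu$ individuals are spread roughly uniformly over the at-most-$S_m$ covered vectors, the frontier ones included. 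Granting $k=\Omega(\mu/S_m)$, the ``deep'' part of a corner descent, where $z$ ranges over $1,\dots,\Theta(nS_m/\mu)$, costs $\sum_z\Theta(nS_m/(\mu z))=O\!\big(\tfrac{nS_m\ln n}{\mu}\big)$ generations as a sum of geometric expectations; the remaining $O(n)$ frontier steps each cost $O(1)$ generations once dense, and the per-step overhead of refilling a freshly created frontier vector back up to cover number $\Omega(\mu/S_m)$, together with the initial spreading from the clustered uniform population, contributes the $O(n\mu/S_m)$ term. Chernoff bounds and Theorem~1 of~\cite{Witt14}, exactly as in Lemmas~\ref{lem:cover-number-general-1}--\ref{lem:spread-of-solutions-2}, upgrade these expectations to high-probability statements; a union bound over the $O(1)$ corners, a minimum with the $O(n\ln n)$ bound of~\cite{OprisNSGAIII} for large $\mu$, and multiplication by the $\mu$ evaluations per generation yield the two forms of the stated bound.

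The hard part is the cover-number bookkeeping itself: guaranteeing that the frontier vector retains $\Omega(\mu/S_m)$ copies at all times. This is the $m$-objective version of the population-dynamics question the paper is about, and it is more delicate than in $2$-\OMM: the covered region is an $(m/2)$-dimensional lattice whose boundary must be kept densely populated \emph{while} it is being advanced, and a vector just added to the covered region enters with a single copy and must be refilled from its (dense) predecessor before it can carry the frontier further. I expect this to need a two-parameter potential argument tracking both the extent of the covered region and the minimum cover number along its active boundary — essentially the coupling between the covering lemmas and the exploration lemmas of the previous sections, but run so that even spreading \emph{accelerates} progress towards the extremes rather than obstructing it.
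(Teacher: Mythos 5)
Your cost decomposition is exactly the paper's: a ``refill'' budget of $O(n\mu/S_m)$ generations to bring the working fitness vector up to $\lfloor\mu/S_m\rfloor$ copies across its $O(n)$ advances, plus a coupon-collector-style sum $\sum_i O(S_m n/(i\mu))=O(S_m n\ln n/\mu)$ for the advances themselves, concentrated via Theorem~1 of~\cite{Witt14}, union-bounded over the $S_m$ target vectors, and capped by the $O(n\ln n)$ bound of~\cite{OprisNSGAIII} when $\mu$ is large. However, the step you yourself flag as the hard part --- establishing that the frontier vector always carries $k=\Omega(\mu/S_m)$ copies --- is a genuine gap as you have set it up, and the route you propose (lifting the global spreading analysis of Lemmas~\ref{lem:cover-number-general-1} and~\ref{lem:cover-number-general-2} to the $(m/2)$-dimensional lattice, with a two-parameter potential tracking the extent of the covered region and the minimum cover number along its boundary) is both unproven and substantially harder than what is needed. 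In particular, an \emph{even} distribution over all covered vectors would give each vector only about $\mu/(\text{number covered})$ copies, and nothing in your sketch prevents the number of covered vectors from being close to $S_m$ while a freshly created frontier vector still sits at one copy; you would have to argue that the selection mechanism refills precisely the boundary vectors fast enough, which is the delicate coupling you admit you do not have.

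The paper closes this gap with a purely \emph{local} argument that requires no statement about how the other $\mu-\lfloor\mu/S_m\rfloor$ individuals are distributed. For a fixed target $v$, let $y\in P_t$ minimize $d_t=\sum_j|f_{2j-1}(y)-v_{2j-1}|$. Before attempting to decrease $d_t$, one simply waits for the cover number of $f(y)$ to grow to $\lfloor\mu/S_m\rfloor$ by duplication: selecting one of the existing copies and flipping no bit succeeds in a generation with probability at least $1/5$ (exactly as in Case~1 of the proof of Lemma~\ref{lem:cover-number-general-2}), so each unit of growth costs $O(1)$ generations, and Lemma~\ref{lem:cover-number-NSGAIII}(2) guarantees that a cover number of at most $\mu/S_m$, once attained, never drops --- no reference-point bookkeeping, no lattice boundary, no potential function. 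This gives the $\nu=n\lfloor\mu/S_m\rfloor$ geometric variables with success probability $1/5$ contributing the $O(n\mu/S_m)$ term, after which the parent-selection probability $\lfloor\mu/S_m\rfloor/\mu$ yields your advance-phase bound. So your skeleton is sound and your identification of where the difficulty lies is accurate, but the mechanism you propose to resolve it is the wrong one; replacing the global spreading claim by per-vector duplication plus Lemma~\ref{lem:cover-number-NSGAIII}(2) turns your sketch into the paper's proof.
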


\begin{proof}
    We can assume that $\mu \leq \ln(n)S_m$ and $\mu \in \omega(S_m)$ since otherwise the bound from Theorem~5.2 in~\cite{OprisNSGAIII} of $O(n \ln(n))$ expected generations holds. Fix a vector $v$ on the Pareto front. We estimate the probability not to cover $v$ after $6S_men \ln(n)/\mu +10n \lceil{\mu/S_m}\rceil$ generations. For each generation $t$ let $d_t:=\min_{x \in P_t} \sum_{j=1}^{m/2}|f_{2j-1}(x)-v_{2j-1}|$. Note that $0 \leq d_t \leq n$ and that we have covered $v$ if $d_t=0$. 
    Further, by Lemma~\ref{lem:cover-number-NSGAIII}(1), $d_t$ cannot increase. Let $y \in P_t$ be with $\sum_{j=1}^{m/2}|f_{2j-1}(y)-v_{2j-1}| = d_t$. We first increase the cover number of $f(y)$ to $\lfloor{\mu / S_m} \rfloor$ (which, by Lemma~\ref{lem:cover-number-NSGAIII}(2), can only decrease if it exceeds $\lfloor{\mu / S_m \rfloor}$, and even then not below this value), and then proceed to decrease $d_t$. The latter then happens with probability at least $\lfloor{\mu / S_m \rfloor}/\mu \cdot i/n \cdot (1-1/n)^{n-1} \geq i/(2S_men)$ in one single trial and hence, with probability at least $1-(1-i/(2S_m en))^\mu \geq \frac{i\mu/(2S_m en)}{i\mu/(2S_m en)+1} =:p_i$ in one generation. Hence, the time $T$ until $d_t = 0$ is stochastically dominated by an independent sum of geometrically distributed random variables $Y_i$ , $Z_j$ ($i \in \{1, \ldots , n\}$ and $j \in \{1, \ldots , \nu\}$ for $\nu :=n \cdot \lfloor{\mu/S_m}\rfloor$) with success probability $p_i$ and $\tilde{p} = 1/5$ respectively (compare also with the proof of Lemma~4 for the latter). Let $Y:= \sum_{i=1}^n Y_i$ and $Z:=\sum_{j=1}^{\nu} Z_j$. We have $\expect{Y} = \sum_{i=1}^n 1/p_i = \sum_{i=1}^n (1+S_men/(i\mu)) \leq n + S_men (\ln(n)+1)/\mu \leq 2S_men\ln(n)/\mu$ for $n$ sufficiently large and $\expect{Z}=5n \cdot \lfloor{\mu/S_m}\rfloor$. 
    By Theorem~1 in~\cite{Witt14} we obtain for $s=\sum_{i=1}^n 1/p_i^2 =  \sum_{i=1}^n (1+2S_men/(i\mu))^2$, $p=\min_{i \in [n]} p_i = p_1 \geq \mu/(4S_m e n)$, and $\lambda = 8mS_men\ln(n)/\mu$ that 
    $\Pr(Y \geq \expect{Y} + \lambda) \leq \exp\left(-\frac{1}{4} \min\left\{\frac{\lambda^2}{s}, \lambda p \right\}\right) \leq n^{-2m}$
    since $\lambda^2/s = \Omega(\ln^2(n))$ and $\lambda p \geq 2m\ln(n)$. Further, we see for $\tilde{s}=25\nu$, and $\tilde{\lambda}=\expect{Z}$ that $\Pr(Z \geq \expect{Z} + \tilde{\lambda}) \leq \exp(-\frac{1}{4} \min\{\frac{\tilde{\lambda}^2}{\tilde{s}}, \tilde{\lambda} \tilde{p}\}) = e^{-\Omega(n)}.$
    These two inequalities imply $\Pr(T \geq \expect{Y} + \expect{Z} + \lambda + \tilde{\lambda}) \leq \Pr(Y + Z \geq \expect{Y} + \expect{Z} + \lambda + \tilde{\lambda}) \leq \Pr(Y \geq \expect{Y} + \lambda) + \Pr(Z \geq \expect{Z} + \tilde{\lambda}) \leq 2n^{-2m}.$
    By a union bound on all possible $v$, the probability that there is a fitness vector $v$ such that $P_t$ does not contain a Pareto-optimal solution $x$ with $f(x)=v$ after $\expect{Y} + \expect{Z} + \lambda + \tilde{\lambda} \leq 10S_men \ln(n)/\mu +10n \lfloor{\mu/S_m}\rfloor = O(S_mn\ln(n)/\mu+\mu n/S_m)$ generations is at most $(2n/m+1)^{m/2} \cdot 2n^{-2m} = o(1)$. 
    If this does not happen we repeat all the above arguments. Note that in expectation, $1+o(1)$ such periods are sufficient. 
    \end{proof}

    This result improves the corresponding upper bound from~\cite{OprisNSGAIII} by a factor of $\min\{S_m/\mu,\mu/(S_m \ln(n))\}$ both in terms of generations and fitness evaluations if $S_m \leq \mu = O(S_m \ln(n))$. Along with Theorem~\ref{thm:lower-bound}, we see in the case $m=2$ that for $n+1 \leq \mu \leq (n+1)\ln(n)^c$ for a constant $0 \leq c \leq 1/2$ the full Pareto front is covered in expected $\Theta(n^2 \ln(n)/\mu)$ generations, which is a tight runtime bound. 
    
\section{Conclusions}
In this paper, we analyzed the widely used NSGA-III algorithm on the simple $m$-OMM problem and established lower runtime bounds for $m = 2$, as well as improved upper runtime bounds for a constant number $m$ of objectives compared to~\cite{OprisNSGAIII}. For $m = 2$, this leads to a tight runtime bound when employing a superconstant yet carefully chosen population size $\mu$. In this setting, NSGA-III even outperforms NSGA-II, due to its ability to distribute solutions very evenly across the Pareto front. This is very surprising, since the latter is the state of the art algorithm for bi-objective problems (with around 60000 citations). Unlike previous work~\cite{opris2025multimodal}, where NSGA-III's dynamics were analyzed on $m$-OJZJ by first exploring the local optima, and then spreading the solutions evenly across the Pareto front, our analysis required a more refined investigation of the population dynamics. In particular, we bound the maximum cover number \emph{during} the exploration process toward the all-ones string in several stages, where the spread of solutions is not hindered by local optima. These insights provide a deeper understanding of the strengths and limitations of NSGA-III and may serve as a foundation for analyzing its behavior on more complex fitness landscapes. Ultimately, this understanding can aid practitioners in developing enhanced versions of the algorithm with improved performance for efficiently optimizing problems defined by diverse and rugged fitness landscapes. Future research directions may include bounding the maximum cover number on benchmark problems, where it is necessary to \emph{reach} the Pareto front at a first glance, as well as applying the insights on population dynamics to practical scheduling and graph problems.
\bibliography{aaai2026}

@article{Witt14,
  author    = {Carsten Witt},
  title     = {Fitness levels with tail bounds for the analysis of randomized search
               heuristics},
  journal   = {{I}nformation Processing Letters},
  OPTnote = {the stupid bracket around the I is to prevent a stupid ACM style to abbreviate the journal name},
  OPTjournal   = {Inf. Process. Lett.},
  volume    = {114},
  OPTnumber    = {1-2},
  pages     = {38--41},
  year      = {2014},
}

@misc{opris2025multimodal,
      title={A First Runtime Analysis of NSGA-III on a Many-Objective Multimodal Problem: Provable Exponential Speedup via Stochastic Population Update}, 
      author={Andre Opris},
      year={2025},
      eprint={2505.01256},
      archivePrefix={arXiv},
      primaryClass={cs.NE},
      nourl={https://arxiv.org/abs/2505.01256}, 
      note = {to appear at IJCAI~2025}
}

@article{HANDLBIO,
title = {Multiobjective optimization in bioinformatics and computational biology},
journal = {IEEE/ACM Trans Comput Biol Bioinform},
volume = {79},
pages = {279-292},
year = {2007},
noissn = {0959-440X},
nodoi = {https://doi.org/10.1016/j.sbi.2023.102537},
nourl = {https://www.sciencedirect.com/science/article/pii/S0959440X23000118},
author = {Handl, Julia and Kell, Douglas and Knowles, Joshua},
}

@article{CHAUDHARI20221509,
title = {Comparison of NSGA-III with NSGA-II for Multi Objective Optimization of Adiabatic Styrene Reactor},
journal = {Materials Today: Proceedings},
volume = {57},
pages = {1509-1514},
year = {2022},
author = {Pranava Chaudhari and Amit K. Thakur and Rahul Kumar and Nilanjana Banerjee and Amit Kumar}
}

@article{LUUKKONEN2023102537,
title = {Artificial Intelligence in Multi-Objective Drug Design},
journal = {Current Opinion in Structural Biology},
volume = {79},
pages = {102537},
year = {2023},
issn = {0959-440X},
nodoi = {https://doi.org/10.1016/j.sbi.2023.102537},
nourl = {https://www.sciencedirect.com/science/article/pii/S0959440X23000118},
author = {Sohvi Luukkonen and Helle W. {van den Maagdenberg} and Michael T.M. Emmerich and Gerard J.P. {van Westen}},
keywords = {Multi-objective optimisation, Pareto dominance,  drug design, Compound optimisation},
abstract = {The factors determining a drug's success are manifold, making de novo drug design an inherently multi-objective optimisation (MOO) problem. With the advent of machine learning and optimisation methods, the field of multi-objective compound design has seen a rapid increase in developments and applications. Population-based metaheuris-tics and deep reinforcement learning are the most commonly used artificial intelligence methods in the field, but recently conditional learning methods are gaining popularity. The former approaches are coupled with a MOO strat-egy which is most commonly an aggregation function, but Pareto-based strategies are widespread too. Besides these and conditional learning, various innovative approaches to tackle MOO in drug design have been proposed. Here we provide a brief overview of the field and the latest innovations.}
}

@INPROCEEDINGS{9515233,
  author={Qu, Qu and Ma, Zheng and Clausen, Anders and Jørgensen, Bo Nørregaard},
  booktitle={2021 IEEE 4th International Conference on Big Data and Artificial Intelligence (BDAI)}, 
  title={A Comprehensive Review of Machine Learning in Multi-objective Optimization}, 
  year={2021},
  volume={},
  number={},
  pages={7-14},
  keywords={Computers;Bibliographies;Conferences;Machine learning;Evolutionary computation;Artificial neural networks;Parallel processing;multi-objective optimization;machine learning;reinforcement learning;neural network},
  nodoi={10.1109/BDAI52447.2021.9515233}}

@article{DebJain2014,
  author={Deb, Kalyanmoy and Jain, Himanshu},
  journal={IEEE Transactions on Evolutionary Computation}, 
  title={An Evolutionary Many-Objective Optimization Algorithm Using Reference-Point-Based Nondominated Sorting Approach, Part I: Solving Problems With Box Constraints}, 
  year={2014},
  volume={18},
  number={4},
  pages={577-601},
  nodoi={10.1109/TEVC.2013.2281535}}

@InProceedings{Blank2019,
author="Blank, Julian
and Deb, Kalyanmoy
and Roy, Proteek Chandan",
noeditor="Deb, Kalyanmoy
and Goodman, Erik
and Coello Coello, Carlos A.
and Klamroth, Kathrin
and Miettinen, Kaisa
and Mostaghim, Sanaz
and Reed, Patrick",
title="Investigating the Normalization Procedure of {NSGA-III}",
booktitle="Evolutionary Multi-Criterion Optimization",
year="2019",
publisher="Springer International Publishing",
address="Cham",
pages="229--240"
}

@inproceedings{DaOp2023,
  author       = {Duc-Cuong Dang and
                  Andre Opris and
                  Bahare Salehi and
                  Dirk Sudholt},
  noeditor       = {Sara Silva and
                  Lu{\'{\i}}s Paquete},
  title        = {Analysing the Robustness of {NSGA-II} under Noise},
  booktitle    = {Proceedings of the Genetic and Evolutionary Computation Conference},
  series = {GECCO 2023},
  pages        = {642--651},
  publisher    = {{ACM} Press},
  year         = {2023},
  nourl          = {https://doi.org/10.1145/3583131.3590421},
  nodoi          = {10.1145/3583131.3590421}
}

@inproceedings{Qu2022PPSN,
  author    = {Benjamin Doerr and Zhongdi Qu},
  title     = {A First Runtime Analysis of the {NSGA-II} on a Multimodal Problem},
  booktitle = {Proceedings of the International Conference on Parallel Problem Solving from Nature},
  series = {PPSN XVII},
  novolume    = {13399},
  pages     = {399--412},
  publisher = {Springer},
  year      = {2022}
}

@article{Deb2002,
  author    = {Kalyanmoy Deb and
               Amrit Pratap and
               Sameer Agarwal and
               T. Meyarivan},
  title     = {A Fast and Elitist Multiobjective Genetic Algorithm: {NSGA-II}},
  journal   = {{IEEE} Transactions on Evolutionary Computation},
  volume    = {6},
  number    = {2},
  pages     = {182-197},
  year      = {2002}
}

@inproceedings{Badkobeh2015,
  author    = {Golnaz Badkobeh and
               Per Kristian Lehre and
               Dirk Sudholt},
  noeditor    = {Jun He and
               Thomas Jansen and
               Gabriela Ochoa and
               Christine Zarges},
  title     = {Black-box Complexity of Parallel Search with Distributed Populations},
  booktitle = {Proceedings of the Foundations of Genetic Algorithms},
  series = {FOGA 2015},
  pages     = {3-15},
  publisher = {{ACM} Press},
  year      = {2015}
}

@inproceedings{ZhengLuiDoerrAAAI22,
  author    = {Weijie Zheng and
               Yufei Liu and
               Benjamin Doerr},
  title     = {A First Mathematical Runtime Analysis of the Non-dominated Sorting
               Genetic Algorithm {II} {(NSGA-II)}},
  booktitle = {Proceedings of the {AAAI} Conference on Artificial Intelligence},
  series = {AAAI 2022},
  pages     = {10408-10416},
  publisher = {{AAAI} Press},
  year      = {2022},
  timestamp = {Tue, 12 Jul 2022 14:14:21 +0200}
}

@inproceedings{DoerrApprox,
author = {Zheng, Weijie and Doerr, Benjamin},
title = {Better approximation guarantees for the NSGA-II by using the current crowding distance},
year = {2022},
noisbn = {9781450392372},
publisher = {ACM Press},
noaddress = {New York, NY, USA},
nourl = {https://doi.org/10.1145/3512290.3528847},
nodoi = {10.1145/3512290.3528847},
booktitle = {Proceedings of the Genetic and Evolutionary Computation Conference},
pages = {611–619},
numpages = {9},
series = {GECCO 2022}
}

@inproceedings{DoerrQu2023a,
  author    = {Benjamin Doerr and
               Zhongdi Qu},
  title     = {From Understanding the Population Dynamics of the {NSGA-II} to the
               First Proven Lower Bounds},
  booktitle = {Proceedings of the AAAI Conference on Artificial Intelligence},
  series = {AAAI 2023},
  publisher = {{AAAI} Press},
  pages     = {12408-12416},
  year      = {2023}
}

@article{NSgaIIBeat,
author = {Dang, Duc-Cuong and Opris, Andre and Sudholt, Dirk},
title = {Runtime Analysis of Functions Where Widely-Used Evolutionary Multi-Objective Algorithms Beat Simple Ones},
year = {2025},
publisher = {ACM},
address = {New York, NY, USA},
journal = {ACM Transactions on Evolutionary Learning and Optimization}
}

@inproceedings{UpBian,
author = {Bian, Chao and Zhou, Yawen and Li, Miqing and Qian, Chao},
title = {Stochastic population update can provably be helpful in multi-objective evolutionary algorithms},
year = {2023},
noisbn = {978-1-956792-03-4},
nourl = {https://doi.org/10.24963/ijcai.2023/612},
nodoi = {10.24963/ijcai.2023/612},
booktitle = {Proceedings of the Thirty-Second International Joint Conference on Artificial Intelligence},
nolocation = {Macao, P.R.China},
series = {IJCAI 2023},
publisher = {ijcai.org},
pages     = {5513--5521}
}

@ARTICLE{Zheng2023Inefficiency,
  author={Zheng, Weijie and Doerr, Benjamin},
  journal={IEEE Transactions on Evolutionary Computation}, 
  title={Runtime Analysis for the NSGA-II: Proving, Quantifying, and Explaining the Inefficiency for Many Objectives}, 
  year={2024},
  volume={28},
  number={5},
  pages={1442-1454},
  nodoi={10.1109/TEVC.2023.3320278}}

@inproceedings{MOEASubset,
author = {Deng, Renzhong and Zheng, Weijie and Li, Mingfeng and Liu, Jie and Doerr, Benjamin},
title = {Runtime Analysis for State-of-the-Art Multi-objective Evolutionary Algorithms on the Subset Selection Problem},
year = {2024},
noisbn = {978-3-031-70070-5},
publisher = {Springer},
booktitle = {Proceedings of the International Conference on Parallel Problem Solving from Nature},
series = {PPSN XVIII},
year = {2024},
noaddress = {Berlin, Heidelberg},
nourl = {https://doi.org/10.1007/978-3-031-70071-2_17},
nodoi = {10.1007/978-3-031-70071-2_17},
pages = {264–279},
nolocation = {Hagenberg, Austria}
}

@inproceedings{2025Lessons,
author = {Dang, Duc-Cuong and Opris, Andre and Sudholt, Dirk},
title = {Why Dominance Is Not Enough: Lessons from Practical Evolutionary Multi-Objective Algorithms},
year = {2025},
noisbn = {9798400714658},
publisher = {ACM},
noaddress = {New York, NY, USA},
nourl = {https://doi.org/10.1145/3712256.3726414},
nodoi = {10.1145/3712256.3726414},
booktitle = {Proceedings of the Genetic and Evolutionary Computation Conference},
series = {GECCO 2025},
pages = {1604–1612},
numpages = {9},
nolocation = {NH Malaga Hotel, Malaga, Spain}
}

@misc{ZhengDoerrCrowding,
      title={A Crowding Distance That Provably Solves the Difficulties of the NSGA-II in Many-Objective Optimization}, 
      author={Weijie Zheng and Yao Gao and Benjamin Doerr},
      year={2024},
      eprint={2407.17687},
      archivePrefix={arXiv},
      primaryClass={cs.NE},
      url={https://arxiv.org/abs/2407.17687}, 
}

@InProceedings{EMOAEngineer,
author="Amamou, Youssef
and Jebari, Khalid",
noeditor="Kacprzyk, Janusz
and Ezziyyani, Mostafa
and Balas, Valentina Emilia",
title="Multiobjective Evolutionary Algorithms for Engineering Design Problems",
booktitle="International Conference on Advanced Intelligent Systems for Sustainable Development",
year="2023",
publisher="Springer Nature Switzerland",
noaddress="Cham",
pages="318--331"
}

@inproceedings{EMOAsMachine,
author = {Schneider, Lennart and Bischl, Bernd and Thomas, Janek},
title = {Multi-Objective Optimization of Performance and Interpretability of Tabular Supervised Machine Learning Models},
year = {2023},
noisbn = {9798400701191},
publisher = {ACM Press},
noaddress = {New York, NY, USA},
nourl = {https://doi.org/10.1145/3583131.3590380},
nodoi = {10.1145/3583131.3590380},
booktitle = {Proceedings of the Genetic and Evolutionary Computation Conference},
pages = {538–547},
nolocation = {Lisbon, Portugal},
series = {GECCO 2023}
}

@inproceedings{Krejca2025b,
  author    = {Benjamin Doerr and Tudor Ivan and Martin S. Krejca},
  title     = {Speeding Up the {NSGA-II} With a Simple Tie-Breaking Rule},
  booktitle = {Proceedings of the {AAAI} Conference on Artificial Intelligence},
  series = {AAAI~2025},
  pages     = {26964-26972},
  publisher = {{AAAI} Press},
  year      = {2025},
  nodoi       = {10.1109/CEC.2013.6557784}
}

@inproceedings{Zheng_Doerr_2024,
author={Zheng, Weijie and Doerr, Benjamin},
title={Runtime Analysis of the {SMS-EMOA} for Many-Objective Optimization}, 
novolume={38}, 
nourl={https://ojs.aaai.org/index.php/AAAI/article/view/30077}, 
nodoi={10.1609/aaai.v38i18.30077}, 
nonumber={18}, 
booktitle={Proceedings of the AAAI Conference on Artificial Intelligence}, 
series = {AAAI 2024},
publisher={{AAAI} Press},
year={2024}, 
pages={20874-20882}}

@misc{doerr2025tightruntimeGSEMO,
      title={Tight Runtime Guarantees From Understanding the Population Dynamics of the GSEMO Multi-Objective Evolutionary Algorithm}, 
      author={Benjamin Doerr and Martin Krejca and Andre Opris},
      year={2025},
      eprint={2505.01266},
      archivePrefix={arXiv},
      primaryClass={cs.NE},
      nourl={https://arxiv.org/abs/2505.01266}, 
      note={to appear at IJCAI 2025}
}

@misc{Opris2025PAES,
      title={A First Runtime Analysis of the PAES-25: An Enhanced Variant of the Pareto Archived Evolution Strategy}, 
      author={Andre Opris},
      year={2025},
      eprint={2507.03666},
      archivePrefix={arXiv},
      primaryClass={cs.NE},
      url={https://arxiv.org/abs/2507.03666}, 
      note={To appear at FOGA 2025}
}

@inproceedings{DoerrNearTight,
author = {Wietheger, Simon and Doerr, Benjamin},
title = {Near-Tight Runtime Guarantees for Many-Objective Evolutionary Algorithms},
year = {2024},
pages = {153--168},
noisbn = {978-3-031-70084-2},
nourl = {https://doi.org/10.1007/978-3-031-70085-9_10},
nodoi = {10.1007/978-3-031-70085-9_10},
booktitle = {Proceedings of the International Conference on Parallel Problem Solving from Nature},
series = {PPSN XVIII}
}

@InProceedings{NSGAIIINEFF2003,
author="Khare, V.
and Yao, X.
and Deb, K.",
title="Performance Scaling of Multi-objective Evolutionary Algorithms",
booktitle="Evolutionary Multi-Criterion Optimization",
year="2003",
publisher="Springer",
pages="376--390",
noisbn="978-3-540-36970-7"
}

@inproceedings{DoerrQ23b,  OPTnote={no JV},
  author       = {Benjamin Doerr and Zhongdi Qu},
  OPTeditor       = {Brian Williams and Yiling Chen and Jennifer Neville},
  title        = {Runtime Analysis for the {NSGA-II:} Provable Speed-ups from Crossover},
  booktitle = {Proceedings of the {AAAI} Conference on Artificial Intelligence},
  series = {{AAAI} 2023},
  pages        = {12399--12407},
  publisher    = {{AAAI} Press},
  year         = {2023}
  }

@inproceedings{WiethegerD23,
  author       = {Simon Wietheger and
                  Benjamin Doerr},
  title        = {A Mathematical Runtime Analysis of the Non-dominated Sorting Genetic
                  Algorithm {III} {(NSGA-III)}},
  booktitle    = {Proceedings of the International Joint Conference on Artificial Intelligence},
  series = {IJCAI 2023},
  pages        = {5657--5665},
  nopublisher    = {ijcai.org},
  year         = {2023},
  nourl          = {https://doi.org/10.24963/ijcai.2023/628},
  nodoi          = {10.24963/IJCAI.2023/628},
  timestamp    = {Mon, 28 Aug 2023 17:23:07 +0200},
  biburl       = {https://dblp.org/rec/conf/ijcai/WiethegerD23.bib},
  bibsource    = {dblp computer science bibliography, https://dblp.org},
  publisher = {ijcai.org}
}

@article{GARCIA2021100983,
title = {COARSE-EMOA: An indicator-based evolutionary algorithm for solving equality constrained multi-objective optimization problems},
journal = {Swarm and Evolutionary Computation},
volume = {67},
pages = {100983},
year = {2021},
noissn = {2210-6502},
nodoi = {https://doi.org/10.1016/j.swevo.2021.100983},
nourl = {https://www.sciencedirect.com/science/article/pii/S2210650221001450},
author = {Jesús L. Llano García and Raúl Monroy and Víctor Adrián {Sosa Hernández} and Carlos A. {Coello Coello}}
}

@article{STEWART2021103830,
author={Stewart, Ryan 
and Palmer, Todd 
and DuPont, Bryony},
title = {A survey of multi-objective optimization methods and their applications for nuclear scientists and engineers},
journal = {Progress in Nuclear Energy},
volume = {138},
pages = {103830},
year = {2021}
}

@InProceedings{MultiobjEngineerring,
author="Andersson, Johan",
title="Applications of a Multi-objective Genetic Algorithm to Engineering Design Problems",
booktitle="Evolutionary Multi-Criterion Optimization",
year="2003",
publisher="Springer Berlin Heidelberg",
pages="737--751",
noisbn="978-3-540-36970-7"
}

@article{TAMSSAOUET202287,
title = {Multiobjective optimization for complex flexible job-shop scheduling problems},
journal = {European Journal of Operational Research},
volume = {296},
number = {1},
pages = {87-100},
year = {2022},
noissn = {0377-2217},
nodoi = {https://doi.org/10.1016/j.ejor.2021.03.069},
nourl = {https://www.sciencedirect.com/science/article/pii/S0377221721003751},
author = {Karim Tamssaouet and Stéphane Dauzère-Pérès and Sebastian Knopp and Abdoul Bitar and Claude Yugma}
}

@book{MultiBioBook,
    author = {Mukhopadhyay, Anirban and Ray, Sumanta and Maulik, Ujjwal and Bandyopadhyay, Sanghamitra},
    title = {Multiobjective Optimization Algorithms for Bioinformatics},
    publisher = {Springer Singapore},
    year = {2024}}

@article{ArtIntKoziel,
title = {Multi-objective artificial-intelligence-based parameter tuning of antennas using variable-fidelity machine learning},
journal = {Scientific Reports},
year = {2025},
author = {Slawomir Koziel and Anna Pietrenko-Dabrowska and Stanislaw Szczepanski},
volume = {15},
issue = {1},
pages = {21693}
}

@INPROCEEDINGS{NSGAIIIAPPLL,
  author={Yadav, Deepanshu and Ramu, Palaniappan and Deb, Kalyanmoy},
  booktitle={2023 IEEE Congress on Evolutionary Computation (CEC)}, 
  title={Finding Robust Solutions for Many-Objective Optimization Using NSGA-III}, 
  year={2023},
  volume={},
  number={},
  pages={1-8},
  nodoi={10.1109/CEC53210.2023.10254160}}

@article{NSGAIIIappl,
author = {Mkaouer, Wiem and Kessentini, Marouane and Shaout, Adnan and Koligheu, Patrice and Bechikh, Slim and Deb, Kalyanmoy and Ouni, Ali},
title = {Many-Objective Software Remodularization Using NSGA-III},
year = {2015},
issue_date = {May 2015},
publisher = {ACM},
address = {New York, NY, USA},
volume = {24},
number = {3},
issn = {1049-331X},
url = {https://doi.org/10.1145/2729974},
doi = {10.1145/2729974},
abstract = {Software systems nowadays are complex and difficult to maintain due to continuous changes and bad design choices. To handle the complexity of systems, software products are, in general, decomposed in terms of packages/modules containing classes that are dependent. However, it is challenging to automatically remodularize systems to improve their maintainability. The majority of existing remodularization work mainly satisfy one objective which is improving the structure of packages by optimizing coupling and cohesion. In addition, most of existing studies are limited to only few operation types such as move class and split packages. Many other objectives, such as the design semantics, reducing the number of changes and maximizing the consistency with development change history, are important to improve the quality of the software by remodularizing it. In this article, we propose a novel many-objective search-based approach using NSGA-III. The process aims at finding the optimal remodularization solutions that improve the structure of packages, minimize the number of changes, preserve semantics coherence, and reuse the history of changes. We evaluate the efficiency of our approach using four different open-source systems and one automotive industry project, provided by our industrial partner, through a quantitative and qualitative study conducted with software engineers.},
journal = {ACM Transactions on Software Engineering and Methodology},
articleno = {17},
numpages = {45},
keywords = {software quality, software maintenance, remodularization, Search-based software engineering}
}

@inproceedings{OprisNSGAIII,
author = {Opris, Andre and Dang, Duc-Cuong and Neumann, Frank and Sudholt, Dirk},
title = {Runtime Analyses of NSGA-III on Many-Objective Problems},
year = {2024},
publisher = {{ACM} Press},
nourl = {https://doi.org/10.1145/3638529.3654218},
nodoi = {10.1145/3638529.3654218},
booktitle = {Proceedings of the Genetic and Evolutionary Computation Conference},
series = {GECCO 2024},
pages = {1596–1604},
numpages = {9}
}

@inproceedings{Opris2025,
  title={A Many Objective Problem Where Crossover is Provably Indispensable},
  author={Andre Opris},
  year={2025},
  series = {{AAAI} 2025},
  publisher = {AAAI Press},
  BOOKTITLE = {Proceedings of the AAAI Conference on Artificial Intelligence},
  pages = {27108-27116},
  nourl = {https://doi.org/10.1609/aaai.v39i25.34918},
}

@inproceedings{NSGAIICombIJCAI,
  author       = {Sacha Cerf and
                  Benjamin Doerr and
                  Benjamin Hebras and
                  Yakob Kahane and
                  Simon Wietheger},
  title        = {The First Proven Performance Guarantees for the Non-Dominated Sorting
                  Genetic Algorithm {II} {(NSGA-II)} on a Combinatorial Optimization
                  Problem},
  booktitle    = {Proceedings of the International Joint Conference on Artificial Intelligence},
  series = {IJCAI~2023},
  pages        = {5522--5530},
  publisher    = {ijcai.org},
  year         = {2023},
  nourl          = {https://doi.org/10.24963/ijcai.2023/613},
  nodoi          = {10.24963/IJCAI.2023/613},
}

@article{Das1998,
  author       = {Indraneel Das and
                  John E. Dennis},
  title        = {Normal-Boundary Intersection: {A} New Method for Generating the Pareto
                  Surface in Nonlinear Multicriteria Optimization Problems},
  journal      = {{SIAM} Journal on Optimization},
  volume       = {8},
  number       = {3},
  pages        = {631--657},
  year         = {1998},
  nourl          = {https://doi.org/10.1137/S1052623496307510},
  nodoi          = {10.1137/S1052623496307510},
}

@article{Dang2024,
    title = {Crossover can guarantee exponential speed-ups in evolutionary multi-objective optimisation},
    journal = {Artificial Intelligence},
    volume = {330},
    pages = {104098},
    year = {2024},
    issn = {0004-3702},
    nodoi = {https://doi.org/10.1016/j.artint.2024.104098},
    nourl = {https://www.sciencedirect.com/science/article/pii/S0004370224000341},
    author = {Duc-Cuong Dang and Andre Opris and Dirk Sudholt},
    keywords = {Evolutionary computation, Runtime analysis, Recombination, Multi-objective optimisation, Unbiased black-box algorithms, Hypermutation},
    abstract = {Evolutionary algorithms are popular algorithms for multi-objective optimisation (also called Pareto optimisation) as they use a population to store trade-offs between different objectives. Despite their popularity, the theoretical foundation of multi-objective evolutionary optimisation (EMO) is still in its early development. Fundamental questions such as the benefits of the crossover operator are still not fully understood. We provide a theoretical analysis of the well-known EMO algorithms GSEMO and NSGA-II to showcase the possible advantages of crossover: we propose classes of “royal road” functions on which these algorithms cover the whole Pareto front in expected polynomial time if crossover is being used. But when disabling crossover, they require exponential time in expectation to cover the Pareto front. The latter even holds for a large class of black-box algorithms using any elitist selection and any unbiased mutation operator. Moreover, even the expected time to create a single Pareto-optimal search point is exponential. We provide two different function classes, one tailored for one-point crossover and another one tailored for uniform crossover, and we show that some immune-inspired hypermutations cannot avoid exponential optimisation times. Our work shows the first example of an exponential performance gap through the use of crossover for the widely used NSGA-II algorithm and contributes to a deeper understanding of its limitations and capabilities.}
}

\end{document}